\documentclass[twoside]{article}

\usepackage{appendix}
\usepackage{bm}
%
\usepackage[accepted]{aistats2024}
%


\usepackage[round]{natbib}


\usepackage{xcolor}
\usepackage{amssymb}
\usepackage{amsthm}  
\usepackage{graphicx}
\usepackage{hyperref}

\usepackage{amsfonts}

\newtheorem{definition}[]{Definition}
\newtheorem{lemma}[]{Lemma}
\newtheorem{theorem}[]{Theorem}
\newtheorem{proposition}{Proposition}

\usepackage{fontawesome5}
\usepackage{tikz}
\newcommand{\cirbd}{\mathbin{\tikz{\draw[line width=0.25pt] (0,0) circle[radius=0.7ex];\draw[fill] (0,0) circle[radius=0.3ex];}}}

\newcommand{\calM}{\mathcal{M}}
\newcommand{\defeq}{:=}

\newcommand{\gauss}{\mathcal{N}}
\newcommand{\norm}[1]{\| #1 \|}
\newcommand{\prob}{\mathbb{P}}

\usepackage{amsmath}
\DeclareMathOperator*{\argmin}{arg\,min}

\begin{document}

%

%

\twocolumn[

\aistatstitle{Implicit Bias in Noisy-SGD: With Applications to Differentially Private Training}

\aistatsauthor{ Tom Sander \And Maxime Sylvestre \And Alain Durmus }

\aistatsaddress{Meta AI (FAIR) \& École polytechnique\textsuperscript{*}  \And  Université Paris Dauphine \And École polytechnique\textsuperscript{*}} ]

\begin{abstract}
\vspace{-0.3cm}
Training Deep Neural Networks (DNNs) with small batches using Stochastic Gradient Descent (SGD) yields superior test performance compared to larger batches. 
The specific noise structure inherent to SGD is known to be responsible for this implicit bias.
DP-SGD, used to ensure differential privacy (DP) in DNNs' training, adds Gaussian noise to the clipped gradients. 
Surprisingly, large-batch training still results in a significant decrease in performance, which poses an important challenge because strong DP guarantees necessitate the use of massive batches.
We first show that the phenomenon extends to Noisy-SGD (DP-SGD without clipping), suggesting that the stochasticity (and not the clipping) is the cause of this implicit bias, \emph{even with additional isotropic Gaussian noise.}
We theoretically analyse the solutions obtained with continuous versions of Noisy-SGD for the Linear Least Square and Diagonal Linear Network settings, and reveal that the implicit bias is indeed amplified by the additional noise.
Thus, the performance issues of large-batch DP-SGD training are rooted in the same underlying principles as SGD, offering hope for potential improvements in large batch training strategies.
\end{abstract}

\vspace{-0.5cm}
\section{Introduction}

\begin{figure}[b!]
    \centering
    \includegraphics[width=0.95\columnwidth]{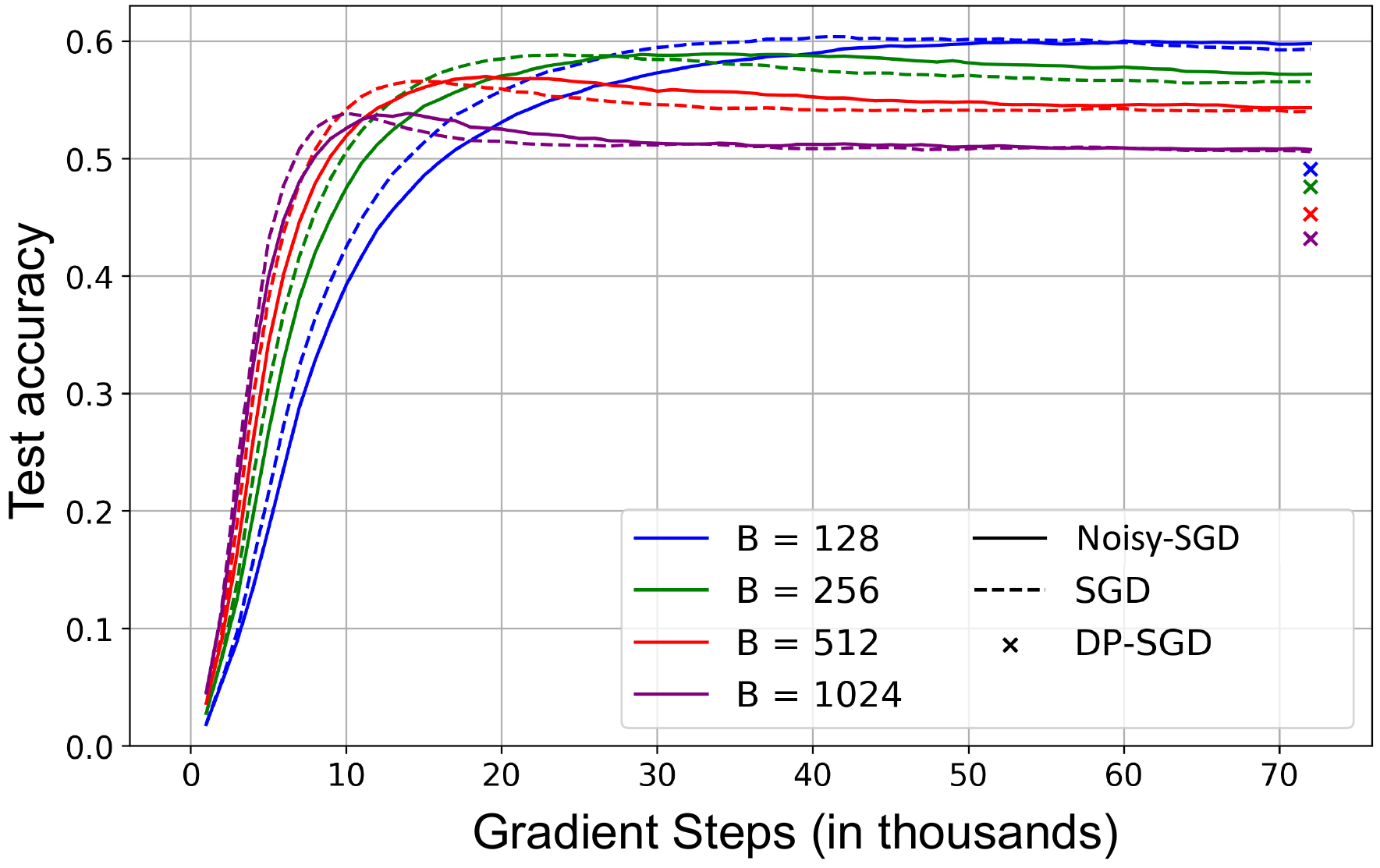}
    \vspace{-0.5em}
    \caption{
    Training from scratch on ImageNet for $S=72k$ steps, using a constant learning rate, with different batch sizes $B$. 
    The effective noise $\sigma/B$ is constant within DP-SGD and Noisy-SGD experiments. 
    The crosses for DP-SGD are obtained from \cite{sander2023tan}. 
    We observe a similar phenomenon for the non-clipped version (Noisy-SGD), \emph{i.e.}, small batches perform better that larger ones, suggesting that clipping is not solely responsible.
    Even with isotropic noise added with greater magnitude than the gradients, SGD's implicit bias persists: the natural noise structure in SGD is robust to Gaussian perturbations.
 \vspace{-1em}
 }
    \label{fig:Figure1}
\end{figure}
\vspace{-0.01in}

In Machine Learning, the Gradient Descent (GD) algorithm is used to minimize an empirical loss function by iteratively updating the model parameters in the direction opposite to the gradient. 
Its stochastic variant, Stochastic Gradient Descent (SGD) \citep{robbins1951, duflo1996algorithmes} uses a random subset of the training data at each step, known as a mini-batch, to estimate the true gradient. 
It enables the training of machine learning models on vast datasets or with extremely large models, where the computation of the full gradient becomes too computationally intensive.
Especially in Deep Learning, SGD or its variant has proven to be a crucial tool for training DNNs, delivering exceptional performance across various domains including computer vision~\citep{ResNets}, natural language processing~\citep{devlin2018bert, touvron2023llama}, and speech recognition \citep{amodei2016deep}.

SGD can achieve better performance than GD under a fixed compute budget (\emph{i.e.,} number of epochs), as well as at a fixed number of steps budget (\emph{i.e.,} number of updates) \citep{keskar2016large, SmallBS, masters2018revisiting}.
Consequently, not only does SGD offer significant computational resource savings, but its inherent stochastic nature introduces randomness into the algorithm which facilitates faster convergence and enhance generalization performance, by allowing the iterates to escape from unfavorable local minima.
On simpler, overparameterized model architectures, the distinctive noise structure of SGD is recognized as a factor responsible for yielding superior solutions compared to Gradient Descent, which is referred to as its ``implicit bias"  \citep{haochen2020shape}.

DNNs possess the ability to grasp the general statistical patterns and trends within their training data distribution ---such as grammar rules for language models--- but also to memorize specific and precise details about individual data points, including sensitive information like credit card numbers \citep{carlini2019secret, carlini2021extracting}.
This capability raises concerns regarding privacy, as accessing a trained model may lead to the exposure of training data, potentially jeopardizing privacy.
One possible solution to address this issue is Differential Privacy (DP) \citep{dwork2006calibrating}, which theoretically controls the amount of information learned from each individual training sample. 

Differentially Private Stochastic Gradient Descent (DP-SGD) \citep{abadi2016deep} offers a robust framework by providing strict DP guarantees for DNNs. 
The gradient of each training sample is clipped and Gaussian noise is subsequently introduced to the sum before the update.
Efforts to enhance the trade-off between privacy and utility in DP-SGD training have been associated with the use of exceedingly large batches \citep{yu2021large,DeepMindUnlocking, yu2023vip}.
A recent observation by \cite{sander2023tan} has introduced an intriguing challenge: under the same Gaussian noise, small batches perform considerably better than large batches, akin to the behavior of regular SGD.
It represents a bottleneck because large-batch training is essential for achieving robust privacy guarantees.


DP-SGD distinguishes itself from SGD by two factors: (1) the application of per-sample gradient clipping and (2) the addition of isotropic Gaussian noise to the sum per batch.
We first observe a persistent manifestation of the implicit bias associated with DP-SGD observed by \cite{sander2023tan} when we remove the gradient clipping component, as depicted in Figure~\ref{fig:Figure1}. 
Therefore, the implicit bias of SGD does persist even with Gaussian noise with greater magnitude than the gradients (see Figure~\ref{fig:grad_norm}).
However, the implicit bias in SGD is conventionally understood to stem from its inherent noise geometry \citep{haochen2020shape}.
In this context, we theoretically investigate the impact of changing the noise structure of SGD on its implicit bias, in the Linear Least Square and Diagonal Linear network (DLN) settings. 
Our precise contributions are the following:



\begin{enumerate}
    \item We show that the performance drop observed for large batch training in DP-SGD persists without clipping when training a DNN on ImageNet
    \item 
    For Linear Least Squares, we show how Noisy-SGD alters the limiting distributions: the different implicit bias compared to SGD can be controlled by the amount of additional noise
    \item For DLNs, we observe that Noisy-SGD can even exhibit an enhanced implicit bias in comparison to SGD. Using continuous modelings, we theoretically demonstrate that a favorable implicit bias indeed exists: It leads to the same solution as SGD, albeit with a distinct effective initialization. 
\end{enumerate}

It proves that the gradient's noise geometry is robust to Gaussian perturbations. 
Our work also suggests that enhancing the performance of large batch training with DP-SGD, which is crucial for achieving robust privacy-utility trade-off, can be accomplished by adopting strategies and techniques developed for managing large batches in non-private settings.



\section{Background and Related Work}

\subsection{Differential Privacy}

\paragraph{General Introduction} $\calM$ is a mechanism that takes as input a dataset $D$ and outputs a machine learning model $\theta \sim \calM(D)$.
\begin{definition}[Approximate Differential Privacy]
A randomized mechanism $\calM$ satisfies $(\varepsilon, \delta)$-DP~\citep{dwork2006calibrating} if, for any pair of datasets $D$ and $D'$ that differ by one sample and for all subset $R\subset \mathbf{Im}(\calM)$,
\begin{equation}
    \prob(\calM(D) \in R) \leq \prob(\calM(D') \in R) \exp( \varepsilon) + \delta.
\end{equation}
\end{definition}

Differential Privacy (DP) safeguards against the ability of any potential adversary to infer information about the dataset $\mathcal{D}$ once they've observed the output of the algorithm. 
In machine learning, this concept implies that if someone acquires the model's parameter $\bm{\theta}$ trained with a DP algorithm $\mathcal{M}$, then the training data is, by provable guarantees, difficult to reconstruct or infer~\citep{balle2022reconstructing, guo2022bounding,guo2023analyzing}.

\paragraph{DP-SGD}~\citep{chaudhuri2011differentially, abadi2016deep} is the most popular DP algorithm to train DNNs.
It first selects samples uniformly at random with probability $q=B/N$. 
For $C >0$, define the clipping function for any $X \in \mathbb{R}^d$ by $\mathrm{clip}_C(X) = C\cdot X/\Vert X \Vert$ if $\Vert X \Vert \geq C$ and $\mathrm{clip}_C(X) = X$ otherwise. DP-SGD clips the per sample gradients, aggregates them and adds Gaussian noise.
Given model parameters $\bm{\theta}_k$, DP-SGD defines the update
$\bm{\theta}_{k+1} = \bm{\theta}_k -\eta_k \widetilde{\mathbf{g}}_k$ where $\eta_k$ is the step size and $\widetilde{\mathbf{g}}_k$ is given by:
\begin{equation}
\label{eq:DP_SGD}
\widetilde{\mathbf{g}}_k := \frac{1}{B} \sum_{i \in \mathcal{B}_k} \text{clip}_C\left(\nabla_{\bm{\theta}} \ell_i(\bm{\theta}_{k})\right) + \gauss \left(0, C^2  \frac{\sigma^2}{B^2} \mathbf{I} \right).
\end{equation}
$\ell_i(\bm{\theta})$ represents the individual loss function computed for the sample $\mathbf{x}_i$. 
The privacy analysis of DP-SGD hinges on the combination of multiple steps. 
A notably robust analytical framework to account for $(\varepsilon, \delta)$ is founded on R\'{e}nyi differential privacy~\citep{mironov2017renyi}. The theoretical analysis of the convergence properties of DP-SGD has been studied for the convex, strongly convex, and nonconvex settings~\citep{bassily2014private, wang2017differentially, feldman2018privacy}.





\paragraph{Necessity of Large batches for DP-SGD} When it comes to training machine learning models with DP-SGD, there is necessarily a privacy-performance trade-off.
However, it can be improved by employing very large batch sizes~\citep{DPBert, li2021large, DeepMindUnlocking, yu2023vip}.

One practical underlying reason is that the magnitude of the effective noise introduced into the average clipped gradient is determined by $\sigma / B$ (\emph{cf.} Equation \eqref{eq:DP_SGD}).
To mitigate this noise, there are two potential approaches: reducing $\sigma$ or increasing $B$. 
While decreasing $\sigma$ might initially appear appealing, R\'{e}nyi differential privacy accounting techniques suggest that as $\sigma$ decreases too much, the privacy parameter $\varepsilon$ experiences an exponential increase~\citep{dwork2016concentrated, bun2016concentrated, mironov2019r, sander2023tan}.
Therefore, increasing the batch size emerges as a critical strategy in DP-SGD training, as it is the most effective means of reducing the effective noise and accelerating the convergence process.

While it is true that employing a larger batch size in DP-SGD results in reduced effective noise, it is important to note that DP training can experience a substantial drop in performance when using larger batch sizes. 
As demonstrated by \citet{sander2023tan}, when training an image classifier with DP-SGD on the ImageNet dataset, keeping the number of steps $S$ fixed  and maintaining a constant effective noise level of $\sigma/B$, the model's performance exhibited a notable decrease when they increased the batch size (refer to Figure~\ref{fig:Figure1}).

\subsection{Implicit Bias of SGD}\label{sec:prelim_implicit}

This section introduces the implicit bias of SGD through the use of Stochastic Differential Equations, and is largely based on \cite{pillaud2022}.
Let $X \in \mathbf{R}^{n\mathrm{x}d}$ and $Y \in \mathbf{R}^{n}$ be the training features and labels, matrices that represent $(x_i, y_i)_{1\leq i\leq n}$, the set of input-label training pairs. Let $\bm{\theta}\in \mathbb{R}^d$ the model's parameters and
$\Bar{X} \defeq X/\sqrt{n}$ the normalized features. 

\paragraph{General introduction}\label{sec:bias_intro}
We consider generic predictors $h$ and the square loss. 
The empirical risk is:

\vspace{-0.1in}
\begin{equation}
    R_n(\bm{\theta}) = \frac{1}{2n}\sum_{i=1}^n (h(\bm{\theta}, x_i)-y_i)^2
\end{equation}
\vspace{-0.1in}

At step $t$, the update with learning rate $\gamma$ is:

\vspace{-0.1in}
\begin{equation}\label{eq:SGD_model}
    \bm{\theta}_t =\bm{\theta}_{t-1} - \gamma \nabla_{\theta} R_n(\bm{\theta_{t-1}}) + \gamma \varepsilon_t(\bm{\theta}_{t-1})
\end{equation}
\vspace{-0.1in}

where $\varepsilon_t(\bm{\theta})$ is the noise term, that depends on the example(s) used to estimate the gradient. 
If only example with index $i_t$ is used:

\vspace{-0.15in}
\begin{equation}\label{eq:SGD_noise}
\varepsilon_t(\bm{\theta}) = \frac{1}{n}\sum_{i=1}^n r_j(\theta) \nabla_{\theta}h(\theta, x_j) - r_{i_t}(\theta) \nabla_{\theta}h(\theta, x_{i_t})
\end{equation}
\vspace{-0.1in}

With $r_i(\theta)=h(\theta, x_i)-y_i$. We refer to \cite{wojtowytsch2021stochastic} for additional references on the noise. It leads to a first notable characteristic of the gradient noise:
\begin{itemize}
    \item \textbf{The geometry}. SGD noise lies in a linear subspace of dimension at most $n$ spanned by the gradients: $\varepsilon(\bm{\theta_t}) \in \text{span} \{\nabla h(\bm{\theta}_t, x_1), ..., \nabla h(\bm{\theta}_t, x_n)\}$, which is a strict subspace of $\mathrm{R}^d$.
\end{itemize}

Training overparametrized  (\emph{i.e.} $d > n$) models with small batches can lead to better generalization performance compared to large batch training~\citep{keskar2016large, SmallBS, masters2018revisiting}. 
In this case, SGD is not only beneficial in terms of computational complexity, but also induces a bias that is beneficial to the performance. 
\emph{In the overparametrized case}, a second characteristic of the gradient noise is:  
\begin{itemize}
    \item \textbf{The scale}. The noise vanishes near optimal solutions: $\varepsilon(\bm{\theta^*})=0$.
\end{itemize}

The fact that SGD converges towards a particular interpolator is refered to as its implicit bias \citep{pesme2021implicit}. 
``Implicit" because no explicit regularization term is added; the regularization comes from the stochastic noise of estimating the gradient at each step using a mini-natch of samples ~\citep{zhang2017understanding}.

\paragraph{Stochastic Differential Equations}\label{sec:SDEs}
At constant step size, SGD is a homogeneous Markov chain \citep{meyn2012markov}.
Studying the continuous time counterparts of numerical optimization methods is a well-established field in applied mathematics, as it helps to study the limiting distribution of the iterates.
Due to its stochastic nature, SGD cannot be modeled as a deterministic flow.
A natural approach is to use stochastic differential equations (SDEs) \citep{Oksendal2003} to represent its dynamics:

\begin{equation}
    d\theta_t = b(t, \theta_t)dt + \sigma(t, \theta_t)dB_t
\end{equation}

where $B$ is a standard Brownian motion. 
The drift term $b$ corresponds to the negative gradient of the risk function, and the noise term represents the stochasticity of SGD, which must meet $\sigma_t\sigma_t^T=\gamma\mathbb{E}[\varepsilon_t\varepsilon_t^T|\theta_t]$ and $\sigma_t \in \text{span} \{\nabla h(\bm{\theta}_t, x_1), ..., \nabla h(\bm{\theta}_t, x_n)\}$ \citep{SDE_SGD}.

\paragraph{Linear Least Square.}We minimize in $\theta$:

\vspace{-0.15in}
\begin{equation}
    L(\theta) = \frac{1}{2n}\sum_{i=1}^n(\langle \theta, x_i \rangle - y_i)^2
\end{equation}
\vspace{-0.1in}

In the overparametrized case, even if there is an infinite number of interpolators, GD and SGD will both converge to $\theta^{LS}$, the solution with the smallest $l_2$ norm, leading to the same implicit bias  \citep{zhang2017understanding}. 

\paragraph{Diagonal Linear Networks (DLN)}\label{sec:prelim_DLN}
The implicit bias of SGD appears for more complex architectures.
For instance, \cite{chizat2020implicit} have studied a two-layer neural networks trained with the Logistic Loss, and \cite{pesme2021implicit} a sparse regression using a 2-layer DLN. 
In this work, we focus on the latter.
DLN corresponds to a toy neural network that has garnered significant interest in the research community \citep{vaškevičius2019implicit, woodworth20Kernel, haochen2020shape}.
This attention stems from the fact that it is an informative simplification of nonlinear models.
The forward pass is $\langle u \cirbd v, x \rangle$ where $\cirbd$ is a term by term multiplication, which can equivalently be written $\langle w_+^2 - w_-^2, x_i \rangle$, for $u, v, w_-, w_+ \in \mathbb{R}^d$.
The the goal is to minimize in $w=(w_+,w_-)$ the loss: 
\begin{equation*}\label{eq:DLN}
    L(w) \defeq \sum_{i=1}^n(\langle w_+^2 - w_-^2, x_i \rangle-y_i)^2 =: \sum_{i=1}^n(\langle \beta_, x_i \rangle-y_i)^2
\end{equation*}

It is similar to the linear least square problem but this time the optimization is (non convex) in $w$. Considering GD's gradient flow from the initialisation $w_{0, \pm} = \alpha$, \cite{woodworth20Kernel} have shown  that $\beta := w_+^2 - w_-^2$ follows a mirror flow \citep{BECK2003167} on the loss $L$ and with the hyperbolic entropy:

\vspace{-0.15in}
\begin{equation}\label{eq:hyperbolic_potential}
\phi_{\alpha}(\beta) = \frac{1}{4}\sum_{i=1}^d \beta_i \text{ arcsinh}(\frac{\beta_i}{2\alpha_i^2}) - \sqrt{\beta_i^2+4\alpha_i^4}
\end{equation}
\vspace{-0.1in}

as a potential. It means that $\beta = w_+^2 - w_-^2$ follows the dynamic $d\nabla\phi_\alpha(\beta_t) = -\nabla_\beta R(\beta)dt$.
It converges to the limit $ \beta_\infty^\alpha := \argmin_{\beta s.t. X\beta = Y} \phi_\alpha(\beta)$.

For large initialisations $\alpha$, $\phi_\alpha$ is close to the $l_2$-norm, which means that the recovered solution has a small $l_2$-norm. 
On the other hand, for smaller values of $\alpha$, the potential  aligns with the $l_1$-norm. 
As a result, the retrieved solution will inherently possess some sparsity.
In the case of a sparse regression, using small values for $\alpha$ thus results in an improved implicit bias.

Expanding upon the GD analysis, \cite{pesme2021implicit} introduced a continuous modeling approach for SGD. Their research demonstrated that, starting from the same initialization $\alpha$, the stochastic process converges with high probability towards $\beta_\infty^{\alpha_\infty}$:

\vspace{-0.15in}
\begin{equation}\label{eq:DLN_SGD_optim}    \beta_\infty^{\alpha_\infty} = \argmin_{\beta s.t. X\beta = Y} \phi_{\alpha_\infty}(\beta),
\end{equation}
\vspace{-0.1in}

which is similar to GD, but with a smaller effective $\alpha$:

\vspace{-0.15in}
\begin{equation}
    \alpha_t = \alpha \cirbd \exp \left(-2 \gamma \mathrm{diag}(\Bar{X}^T\Bar{X}) \int_{0}^{t}L(\beta_s) ds\right)
\end{equation}
\vspace{-0.1in}

Thus, SGD leads to sparser solutions than GD, which is a proof of the implicit bias of SGD for DLNs.


\label{sec:prelim}

\section{Implicit bias of noisy-SGD}\label{sec:theory}


\paragraph{Notations.}
$(B_{t, \pm})_t $ and $ (\Tilde{B}_{t, \pm})_t$ are standardized Brownian motions on $\mathbb{R}^n$ and $\mathbb{R}^d$ respectively.

\subsection{Noisy SGD}\label{sec:noisy_SGD}

DP-SGD (\emph{cf.} Equation~\ref{eq:DP_SGD}) differentiates itself from SGD through (1) the utilization of per-sample gradient clipping and (2) the incorporation of isotropic Gaussian noise into the batch-wise gradient sum.
To study the reason behind the implicit bias of DP-SGD observed in \cite{sander2023tan} (\emph{i.e.}, that small batches perform better than large batches at fixed effective noise $\sigma/B$), we examine if a similar phenomenon exists without clipping (Noisy-SGD), thus using the following noisy gradient update:

\begin{equation}
\label{eq:noisy_SGD}
\widetilde{\mathbf{g}}_k := \frac{1}{B} \sum_{i \in \mathcal{B}_k} \nabla_{\bm{\theta}} \ell_i(\bm{\theta}_{k}) + \gauss \left(0, \frac{\sigma^2}{B^2} \mathbf{I} \right)
\end{equation}

We demonstrate in Figure~\ref{fig:Figure1} the persistence of this small-batch training superiority even in the absence of clipping, and even when confronted with Gaussian noise that surpasses the magnitude of the gradients (as depicted in Figure~\ref{fig:grad_norm}; see Section~\ref{sec:noisy_SGD_ImageNet} for experimental details).
This piques our interest in delving into the theoretical aspects of Noisy-SGD for simple architectures in order to explore the extent to which the natural gradient geometry remains robust to Gaussian perturbations. 
The implications of studying Noisy-SGD as a proxy for DP-SGD are discussed in Section~\ref{sec:discussion}.

\begin{figure}[b!]
    \centering
    \includegraphics[width=\columnwidth]{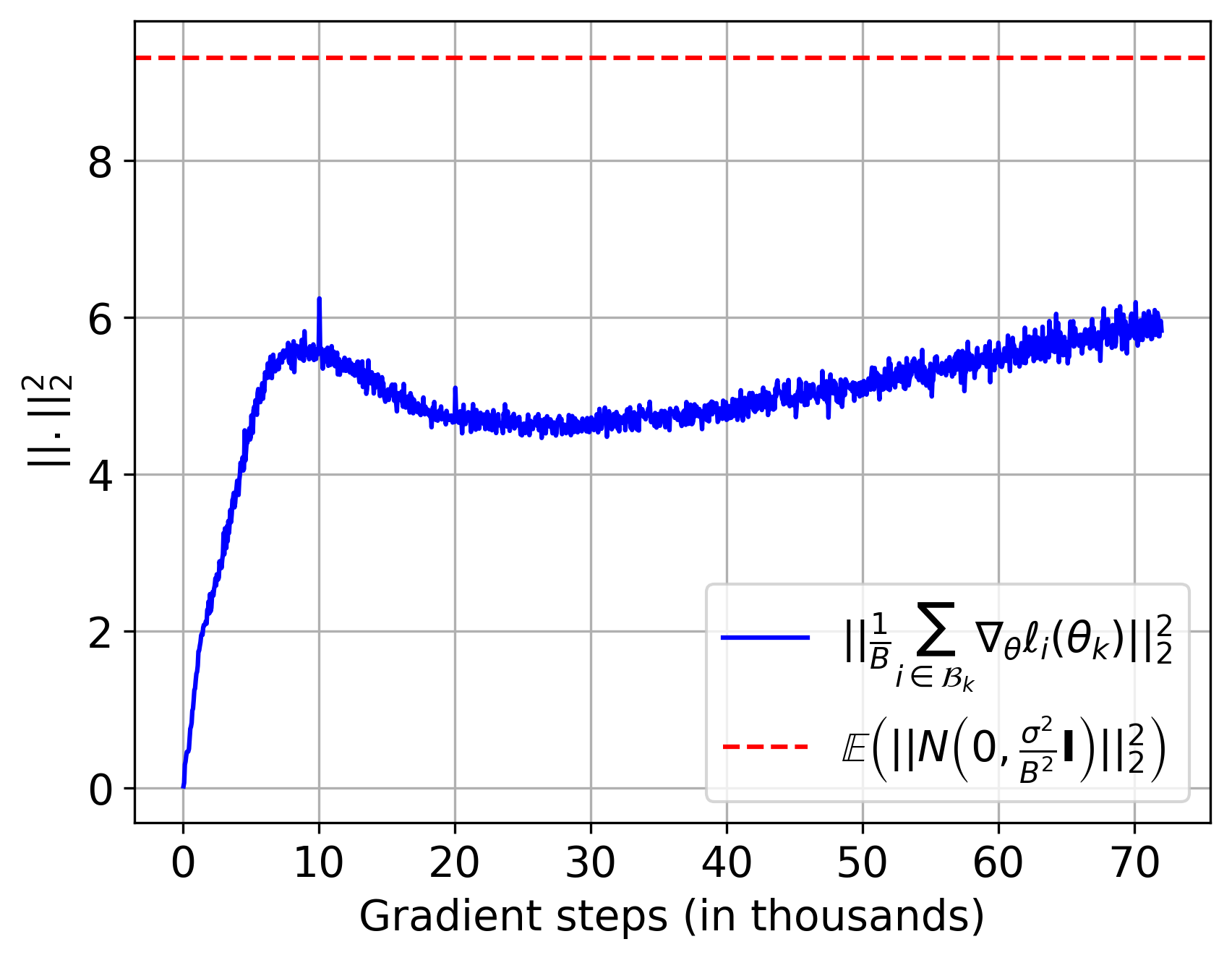}
    \vspace{-1em}
    \caption{Noisy-SGD on ImageNet. We compare the norm of the mini-batch gradient to the one of the Gaussian noise when training with Noisy-SGD on ImageNet, for $B=128$ and the same set-up as in Figure~\ref{fig:Figure1}. The noise magnitude is greater than the gradients. }
    
 \vspace{-1em}
    \label{fig:grad_norm}
\end{figure}

\begin{figure*}[t]
    \centering
    \includegraphics[width=0.95\textwidth]{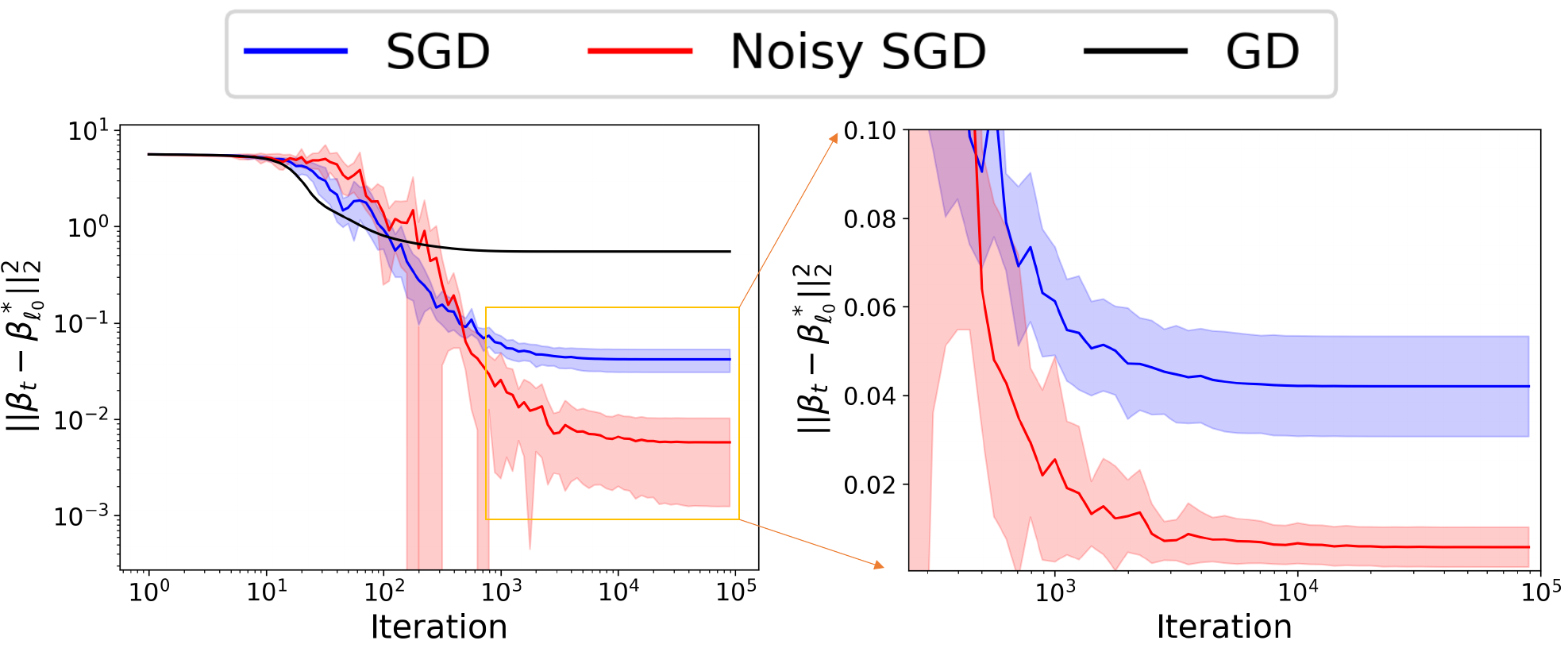}
    \vspace{-1em}
    \caption{
     Diagonal Linear Network: Implicit Bias of GD, SGD and Noisy SGD ($\sigma=0.5$ in Equation~\ref{eq:noisy_SGD_DLN}). 
    Shaded areas represent one standard deviation over 5 runs.
    (Left) Compared to GD with the same initialisation $\alpha=0.1$, SGD attain solutions closer to the sparse $\beta^*_{l_0}$, as expected from \cite{pesme2021implicit}. 
    Moreover, we observe that Noisy-SGD has a better implicit bias than SGD: the gradient noise structure is enhanced by perturbations.
    (Right) In absolute terms, Noisy-SGD does not even showcase more variance than SGD (near convergence).
 \vspace{-0.2in}
 }
    \label{fig:Figure3}
\end{figure*}

\subsection{Linear Least Square}\label{sec:Linear}

The simplest model to study is Linear Least Square.
We verify that noisy versions of SGD leads to similar solutions than SGD and GD.


\paragraph{Warm-up: Underparametrized setting ($n > d$)}The stochastic noise $\varepsilon(\theta)$ (\emph{cf.} Equation~\ref{eq:SGD_model}) does not vanish near optimal solutions. 
For fixed $\epsilon>0$, we consider the following SDE as the continuous approximation of SGD \citep{ali20a}:

\vspace{-0.15in}
\begin{equation}
    d\theta_t = -\Bar{X}^{T}(\Bar{X}\theta_t-Y)dt + \sqrt{\gamma}\epsilon\Bar{X}^TdW_t
\end{equation}
\vspace{-0.1in}

We notice that it respects the characteristics highlighted in Section~\ref{sec:SDEs}.
This Orsnten-Uhlenbeck process has a limit, which can be solved through its characteristic Lyapunov equation (see Appendix \ref{appendix:least}). 
With $X^\dagger$ the pseudo-inverse of $X$ and $\theta^ {LS}:=X^{\dagger} Y$:

\vspace{-0.1in}
\begin{equation}
    \theta_{\infty} \sim \mathcal{N}\left(\theta^ {LS}, \frac{\gamma \epsilon^2}{2} I_d \right)
\end{equation}
\vspace{-0.1in}

We now consider adding isotropic Gaussian noise on top of the gradient's structured noise of SGD:

\vspace{-0.15in}
\begin{equation}
    d\theta_t = -\Bar{X}^{T}(\Bar{X}\theta_t-Y)dt + \sqrt{\gamma}\epsilon\Bar{X}^TdW_t + \sigma d\Tilde{W_t}
\end{equation}
\vspace{-0.1in}

Solving the characteristic Lyapunov equation (see Appendix \ref{appendix:least}), the limiting distribution becomes:

\vspace{-0.1in}
\begin{equation}
\boxed{
    \theta_{\infty} \sim \mathcal{N}\left(\theta^ {LS}, \frac{\gamma \epsilon^2}{2} I_d + \frac{\sigma^2}{4}(\Bar{X}^{T}\Bar{X})^{-1}\right)
}
\end{equation}
\vspace{-0.1in}

We observe that adding isotropic noise changes the limiting distribution: its shape depends on the training data.
This simple example shows that geometry variation of the noise in the linear least square setting implies a controlled variation of the limiting process.

\paragraph{Over parametrized case ($d > n$):} As described in Section~\ref{sec:prelim_implicit}, GD and SGD will in this case both converge to $\theta^{LS}$ and therefore have the same implicit bias.
SGD's dynamic can be studied through the following SDE \citep{ali20a}, where this time the gradient noise vanishes near the interpolators:

\vspace{-0.15in}
\begin{equation}\label{eq:SDE_linear_overparametrized}
    d\theta_t = -\Bar{X}^{T}(\Bar{X}\theta_t-Y)dt + \sqrt{\gamma}\norm{X\theta_t - Y}_2\Bar{X}^T dB_t
\end{equation}
\vspace{-0.1in}

which also respects SGD's specificities. 
$\theta_t$ converges to $\theta^{LS}$ for $\gamma \leq 1/Tr(\Bar{X}^T\Bar{X})$. 
If we add a constant spherical Gaussian noise at each step, the iterates will not converge.
We thus look at the impact of adding a noise with a similar scale as the gradient noise:


\vspace{-0.15in}
\begin{equation*}\label{eq:NSDE_linear_overparametrized}
\begin{split}
    d\beta_t & = -\Bar{X}^{T}(\Bar{X}\beta_t-Y)dt + \sqrt{\gamma}\norm{X\beta_t - Y}_2\Bar{X}^T dB_t  \\
   & + \space \sqrt{\gamma}\norm{X\beta_t - Y}_2 \sigma d\Tilde{B}_t 
\end{split}
\end{equation*}
\vspace{-0.15in}

We show that we can control the difference between the SDE and the noisy SDE through $\eta_t = \norm{\theta_t - \beta_t}_2^2$:

\begin{theorem}
    If $\gamma \leq 1/Tr(\Bar{X}^T\Bar{X})$ then
    \begin{equation}
\boxed{
    \mathbf{E}(\eta_t)\leq \gamma d\sigma^2 \int_{0}^{t} L(\beta_t) ds
}
\end{equation}
\end{theorem}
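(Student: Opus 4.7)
The plan is to synchronously couple $\theta_t$ and $\beta_t$ by driving the shared diffusion terms with the \emph{same} Brownian motion $B_t$, and then to apply Itô's formula to $\eta_t=\|\theta_t-\beta_t\|^2$. Under the assumption on $\gamma$, the dissipative drift coming from the common linear drift $-\bar X^T\bar X$ will exactly absorb the quadratic variation produced by the coupled noise, leaving only the contribution of the extra isotropic term, whose expected magnitude is proportional to the loss $L(\beta_t)$; a direct integration then gives the bound.

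Concretely, set $\delta_t=\theta_t-\beta_t$. Because both drifts are affine in the state with the same matrix, the common affine part cancels under subtraction and
\begin{equation*}
d\delta_t = -\bar X^T\bar X\,\delta_t\,dt + \sqrt{\gamma}\bigl(\|X\theta_t-Y\|-\|X\beta_t-Y\|\bigr)\bar X^T\,dB_t - \sqrt{\gamma}\,\sigma\,\|X\beta_t-Y\|\,d\tilde B_t.
\end{equation*}
Applying Itô's formula to $\|\delta_t\|^2$ and taking expectations (the martingale terms vanish) yields
\begin{equation*}
\tfrac{d}{dt}\mathbf{E}[\eta_t] = \mathbf{E}\Bigl[-2\|\bar X\delta_t\|^2 + \gamma\operatorname{Tr}(\bar X^T\bar X)\bigl(\|X\theta_t-Y\|-\|X\beta_t-Y\|\bigr)^2 + \gamma d\sigma^2\|X\beta_t-Y\|^2\Bigr],
\end{equation*}
where the three terms correspond respectively to the dissipation, the quadratic variation of the coupled noise, and the quadratic variation of the added isotropic noise.

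The crux is to absorb the middle term. By the reverse triangle inequality, $\bigl(\|X\theta_t-Y\|-\|X\beta_t-Y\|\bigr)^2\leq\|X\delta_t\|^2$, and with the normalisation $\bar X=X/\sqrt{n}$ this quantity is controlled by $\|\bar X\delta_t\|^2$. The hypothesis $\gamma\leq 1/\operatorname{Tr}(\bar X^T\bar X)$ is then exactly what makes the coupled-noise contribution bounded by $2\|\bar X\delta_t\|^2$, cancelling the dissipative first term. What remains is $\tfrac{d}{dt}\mathbf{E}[\eta_t]\leq \gamma d\sigma^2\,\mathbf{E}\|X\beta_t-Y\|^2$, and integrating from $0$ (with $\eta_0=0$) while identifying $\|X\beta_t-Y\|^2$ with $L(\beta_t)$ up to the normalisation convention of the text yields the announced inequality. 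The main obstacle is precisely this absorption step: one has to track the $X$-versus-$\bar X$ scaling carefully so that the trace bound and the reverse triangle inequality combine to yield exactly the threshold appearing in the hypothesis; every other step—existence of strong solutions, Itô's formula, and the vanishing of martingale expectations—is standard.
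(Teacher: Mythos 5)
Your proposal is correct and takes essentially the same route as the paper: it synchronously couples the two SDEs through the shared Brownian motion, applies Itô's formula to $\eta_t=\|\theta_t-\beta_t\|^2$, bounds the coupled-noise quadratic variation by the dissipation via the reverse triangle inequality and the condition $\gamma\leq 1/\mathrm{Tr}(\bar X^T\bar X)$, and integrates the remaining isotropic term. The only caveat, the loose bookkeeping of the $X$ versus $\bar X$ (and $L$ versus residual-norm) normalisation constants, is present in the paper's own derivation as well, so you match it at the same level of rigor.
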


\begin{proof}
    Apply Itô's formula to $\eta_t$, see Appendix \ref{appendix:least}
\end{proof}

Here, we have highlighted that the variation from the original SDE will be contingent on factors such as dimension, step size, noise magnitude, and convergence rate. 
As a result, the solution obtained from a noisy variant of SGD may closely resemble that of traditional SGD, thus leading to a similar implicit bias.
The extent of this similarity depends on the parameters used.

\subsection{Noisy-SGD training of DLNs}\label{sec:DLN}

For a 2-layer DLN sparse regression, SGD-induced noise steers the optimization dynamics towards advantageous regions that exhibit better generalization than GD (\emph{i.e.}, more sparse) \cite{pesme2021implicit}.
We investigate the impact of the addition of Gaussian noise.

\vspace{-0.1in}
\paragraph{Continuous model for Noisy-SGD}
Adding constant Gaussian noise at each gradient step as in Equation~\ref{eq:noisy_SGD} inevitably makes the process diverge for overparametrized DLNs.
However, adding Gaussian noise with comparable magnitude to that of the gradients still allows to study the impact of geometry deformation, while maintaining a vanishing noise, enabling convergence. 
\cite{pesme2021implicit} have shown that the SGD update can be written $w_{t+1,\pm} = - \gamma\nabla_{w_{t,\pm}} L(w_t) \pm \gamma [X^T\zeta_{i_t}(\beta_t)]\cirbd w_{t, \pm}$, where $\zeta_{i_t}$ is detailed in Appendix~\ref{appendix:DLN}. 
We thus consider the following noisy update:

\vspace{-0.15in}
\begin{equation}\label{eq:noisy_SGD_DLN}
\begin{split}
    w_{t+1,\pm} = &   - \gamma\nabla_{w_{t,\pm}} L(w_t)   \pm \gamma [X^T\zeta_{i_t}(\beta_t)]\cirbd w_{t, \pm} \\ & \pm \gamma \sigma_t Z_{t,\pm} \cirbd w_{t, \pm},
\end{split}
\end{equation}
\vspace{-0.15in}


with $Z_{t,\pm} \sim \mathcal{N}_d(0,1)$ and $\sigma_t \in \mathbb{R}^{\mathbb{R}}$. 
We consider here $\sigma_t = 2\sigma \sqrt{L(w_t)}$ for $\sigma\geq 0$, and show more general results in Appendix~\ref{appendix:DLN}. 
The corresponding SDE is:






\vspace{-0.15in}
\begin{equation}\label{eq:SDE_gn_DLN}
\begin{split}
    dw_{t,\pm} = & \mp [\Bar{X}^T r(w_t)]\cirbd w_{t,\pm}dt \\&+ 2\sqrt{\gamma L(w_t)}w_{t,\pm} \cirbd \Bar{X}^TdB_{t,\pm} \\& + 2\sigma\sqrt{\gamma L(w_t)}w_{t,\pm} \cirbd d\Tilde{B}_{t,\pm}
\end{split}
\end{equation}
\vspace{-0.1in}

where $r(w) = \Bar{X}(w^2_+-w^2_--\beta^\ast)$ and $\beta^*$ is an interpolator.
We notice that compared to the SDE of \cite{pesme2021implicit}, it only adds the term $2\sigma\sqrt{\gamma L(w_t)}w_{t,\pm} \cirbd d\Tilde{B}_{t,\pm}$.
See section~\ref{sec:discussion} for a discussion on the impact of modeling with a decreasing noise, and appendix \ref{appendix:DLN} for detailed computation of the link between equations~\eqref{eq:noisy_SGD_DLN} and \eqref{eq:SDE_gn_DLN}.

We now demonstrate that despite the Gaussian perturbation, $\beta$ follows a stochastic mirror flow similar to the one in \cite{pesme2021implicit}, detailed in Section~\ref{sec:prelim_DLN}:
\begin{proposition}
    Let $(w_{t,\pm})_{t \geq 0}$ be defined as in equation \eqref{eq:SDE_gn_DLN} from initialisation $\alpha$. Then $(\beta_t=w_{t,+}^2-w_{t,-}^2)_{t\geq 0}$ follows a stochastic mirror flow defined by:
    \begin{equation}\label{eq:noisy}
\begin{split}
d\nabla\phi_{\alpha_t^\beta}(\beta_t) = & - \nabla L(\beta_t)dt + \sqrt{\gamma L(\beta_t)}\Bar{X}^TdB_t \\ & + \sqrt{\gamma L(\beta_t)}\sigma d\Tilde{B}_t
\end{split}
\end{equation}
where $\Tilde{B}_{s} = (\Tilde{B}_{s, +} + \Tilde{B}_{s, +})/{2}$, $B_{s} = (B_{s, +} + B_{s, +})/{2}$,  
\begin{equation}\label{eq:alpha_noisy}
\begin{split}
\alpha^{\beta}_t = & \alpha \exp{(-2\gamma\sigma^2\int_0^t L(\beta_s)ds)} \\& \cirbd\exp{(-2\gamma \mathrm{diag}(\Bar{X}^T\Bar{X}) \int_0^t L(\beta_s)ds)}
\end{split}
\end{equation}
\end{proposition}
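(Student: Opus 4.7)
The plan is to extend the mirror flow derivation of \cite{pesme2021implicit} by carefully tracking the extra It\^o corrections produced by the added isotropic term $2\sigma\sqrt{\gamma L(w_t)}\,w_{t,\pm}\cirbd d\tilde B_{t,\pm}$ in \eqref{eq:SDE_gn_DLN}. The first step is to apply It\^o's formula coordinate-wise to $\log w_{t,\pm,i}$. Since $B_{t,+},B_{t,-},\tilde B_{t,+},\tilde B_{t,-}$ are mutually independent, the It\^o correction comes out as $-2\gamma L(\beta_t)\bigl[(\bar X^T\bar X)_{ii}+\sigma^2\bigr]dt$; the $\sigma^2$ summand is the sole new contribution with respect to \cite{pesme2021implicit} and will be the source of the new exponential factor in the effective initialization.

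The second step is to split the two log-processes along the two orthogonal combinations $v_{t,i}:=\log(w_{t,+,i}/w_{t,-,i})$ and $s_{t,i}:=\log(w_{t,+,i}w_{t,-,i})$. For $v_t$, the It\^o corrections cancel identically and the deterministic drifts add to $-2[\bar X^T r(w_t)]_i\,dt$, which up to normalisation is $-\nabla L(\beta_t)_i\,dt$; the martingale part is a linear combination of $B_{t,+}-B_{t,-}$ and $\tilde B_{t,+}-\tilde B_{t,-}$, which after the rescaling in the statement becomes the Brownian motions $B_t,\tilde B_t$ driving \eqref{eq:noisy}. For $s_t$, the drifts cancel and the It\^o corrections add, yielding
\begin{equation*}
d s_{t,i} = -4\gamma L(\beta_t)\bigl[(\bar X^T\bar X)_{ii}+\sigma^2\bigr]dt + dM_{t,i},
\end{equation*}
whose deterministic part integrates to exactly $\log(\alpha_t^\beta)_i^2$ with $\alpha_t^\beta$ as in \eqref{eq:alpha_noisy}; the extra factor $\exp(-2\gamma\sigma^2\int_0^t L(\beta_s)ds)$ is the direct signature of the new $\sigma^2$ It\^o correction.

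The third step closes the loop via the explicit form of the potential. A direct computation on \eqref{eq:hyperbolic_potential} gives $\nabla\phi_\alpha(\beta)_i$ proportional to $\mathrm{arcsinh}(\beta_i/(2\alpha_i^2))$, and the elementary identity $\mathrm{arcsinh}\bigl((w_{+,i}^2-w_{-,i}^2)/(2 w_{+,i}w_{-,i})\bigr)=\log(w_{+,i}/w_{-,i})=v_{t,i}$ identifies (up to a universal constant) $\nabla\phi_{\alpha_t^\beta}(\beta_t)$ with $v_t$. Substituting the SDE obtained for $v_t$ in the second step then recovers \eqref{eq:noisy} verbatim: the drift gives $-\nabla L(\beta_t)\,dt$ and the two martingale terms give $\sqrt{\gamma L(\beta_t)}\bar X^T dB_t$ and $\sqrt{\gamma L(\beta_t)}\sigma\,d\tilde B_t$, with the latter appearing as the direct image of the added Gaussian perturbation.

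The main obstacle is to reconcile the deterministic process $\alpha_t^\beta$ of \eqref{eq:alpha_noisy} with the pathwise random object $\sqrt{w_{t,+}w_{t,-}}$ implicitly appearing in the arcsinh identity, since the latter also carries the extra martingale $M_t$ from the sum decomposition. The key observation is that $B_{t,+}+B_{t,-}$ (resp.\ $\tilde B_{t,+}+\tilde B_{t,-}$) is independent of $B_{t,+}-B_{t,-}$ (resp.\ $\tilde B_{t,+}-\tilde B_{t,-}$), so $M_t$ does not appear in the equation satisfied by $v_t$, and only the drift part of $s_t$—precisely \eqref{eq:alpha_noisy}—enters \eqref{eq:noisy}. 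Once this independence is in hand, the remainder is routine It\^o bookkeeping mirroring \cite{pesme2021implicit}.
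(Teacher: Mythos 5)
Your overall route is essentially the paper's: integrate the coordinate-wise linear SDE (equivalently, apply Itô's formula to $\log w_{t,\pm,i}$), note that the extra Itô correction produced by the isotropic term is exactly the source of the factor $\exp\bigl(-2\gamma\sigma^2\int_0^t L(\beta_s)ds\bigr)$, and invert the resulting $\sinh$ representation of $\beta_t$ through $\nabla\phi_\alpha(\beta)=\tfrac14\,\mathrm{arcsinh}\bigl(\beta/(2\alpha^2)\bigr)$. The paper does precisely this (it writes $\beta_t=2\alpha_t^2\sinh(2\eta_t+2\delta_t)$ and differentiates the arcsinh), only in a slightly more general setting with $A=(\bar X\mid\sigma I_d)$ and a general noise covariance $\sigma_t$.

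There is, however, a genuine gap in your final identification. You assume $B_{t,+},B_{t,-},\tilde B_{t,+},\tilde B_{t,-}$ mutually independent, so the sum process $s_{t,i}=\log(w_{t,+,i}w_{t,-,i})$ retains a nonzero martingale part $M_{t,i}$, i.e. $w_{t,+}w_{t,-}=(\alpha^\beta_t)^2 e^{M_t}$ rather than $(\alpha^\beta_t)^2$. Consequently $\nabla\phi_{\alpha^\beta_t}(\beta_t)_i=\tfrac14\,\mathrm{arcsinh}\bigl(e^{M_{t,i}}\sinh v_{t,i}\bigr)\neq \tfrac14 v_{t,i}$, and your closing argument — that $M_t$ can be ignored because it is independent of the difference martingale — is not valid: independence does not remove $M_t$ from this pathwise identity, so \eqref{eq:noisy} with the index \eqref{eq:alpha_noisy} does not follow (the noise coefficients would also acquire spurious $\sqrt{2}$ factors). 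What makes the proposition true, and what the paper's proof uses, is that the \emph{same} Brownian increments drive $w_+$ and $w_-$ with opposite signs (see the appendix SDE, where a single $B_t,\tilde B_t$ appears with a $\pm$ in front, mirroring the shared SGD noise $\zeta_{i_t}$ in \eqref{eq:noisy_SGD_DLN}). With that coupling the martingale parts of $\log w_+$ and $\log w_-$ cancel identically in the sum, $w_{t,+}w_{t,-}=(\alpha^\beta_t)^2$ holds pathwise, and your steps go through verbatim; otherwise you are proving a different statement, with the random effective parameter $\sqrt{w_{t,+}w_{t,-}}$ in place of $\alpha^\beta_t$. So either impose this coupling explicitly or prove $M_t\equiv 0$; as written, the independence assumption breaks the key step.
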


\vspace{-0.15in}
\begin{proof}
    See appendix \ref{appendix:DLN}.
\end{proof}







The continuous version of SGD analysed by \cite{pesme2021implicit} had led to the following mirror flow:

\vspace{-0.15in}
\begin{equation}\label{eq:not_noisy}
    d\nabla\phi_{\alpha_t}(\beta_t) = - \nabla L(\beta_t)dt + \sqrt{\gamma L(\beta_t)}\Bar{X}^TdB_t
\end{equation}
\vspace{-0.15in}

With  $\alpha_t = \alpha \exp \left(-2 \gamma \mathrm{diag}(\Bar{X}^T\Bar{X}) \int_{0}^{t}L(\beta_s) ds\right)$.
Our resulting SDE for Noisy-SGD (\emph{cf.} Equation~\ref{eq:noisy}) differs by two aspects: the additional spherical noise $\sqrt{\gamma L(\beta_t)}\sigma d\Tilde{B}_t$ and a smaller effective value for $\alpha$.

\paragraph{Solution.} If $\beta$ follows the original mirror flow, than the process converges with high probability towards $\argmin_{\theta \text{ s.t. } X\theta=Y} \phi_{\alpha_{\infty}}$.
This formulation is possible only because the KKT conditions are respected by the limit vector, as the updates remain confined to $\text{span}(x_1,...,x_n)$.
However, this is no longer the case for Noisy-SGD. We still show a comparable outcome.

\begin{theorem}\label{thm:convergence_DLN}
    Let $(w_t)_{t\geq 0}$ follow the SDE \eqref{eq:SDE_gn_DLN} from initialisation $\alpha$. Then for any $p$ close to $0$ there is a constant $C$ s.t. for any step size $\gamma \leq C$, with probability at least $1-p$, $\int_{0}^\infty \beta_s ds$ converges and $\beta_t$ converges with high probability to an interpolator $\beta_\infty$.
\end{theorem}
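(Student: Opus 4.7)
The plan is to adapt the Lyapunov-function argument of \cite{pesme2021implicit} to the noisy setting, using the time-dependent Bregman divergence associated with the hyperbolic entropy. Fix any interpolator $\beta^*$ (for instance $\beta^* = X^\dagger Y$) and set
\begin{equation*}
D_t = \phi_{\alpha_t^\beta}(\beta^*) - \phi_{\alpha_t^\beta}(\beta_t) - \langle \nabla \phi_{\alpha_t^\beta}(\beta_t), \beta^* - \beta_t\rangle.
\end{equation*}
The goal is to show that, up to controllable Itô corrections, $D_t$ is a supermartingale whose drift contains a term $-L(\beta_t)\,dt$, and to deduce from this that $\int_0^\infty L(\beta_s)\,ds < \infty$ with high probability.

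\textbf{Itô calculation.} I would apply Itô's formula to $D_t$ using the mirror-flow representation \eqref{eq:noisy} and the explicit form \eqref{eq:alpha_noisy}. Three contributions appear. The drift of $\langle \nabla \phi_{\alpha_t^\beta}(\beta_t), \beta^* - \beta_t\rangle$ equals $-\langle \nabla L(\beta_t), \beta_t - \beta^*\rangle = -2L(\beta_t)$ by the least-squares identity. The Itô correction from the martingale part of \eqref{eq:noisy} is $\tfrac12 \gamma L(\beta_t)\,\mathrm{Tr}\bigl((\nabla^2 \phi_{\alpha_t^\beta}(\beta_t))^{-1}(\Bar{X}^T\Bar{X} + \sigma^2 I_d)\bigr)$, which, because the Hessian of the hyperbolic entropy is bounded below along the trajectory, is controlled by $C(1+\sigma^2)\gamma L(\beta_t)$ for a constant $C$ depending only on the data and dimension. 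Finally, the explicit $t$-dependence of $\phi_{\alpha_t^\beta}$ through \eqref{eq:alpha_noisy} contributes a further nonpositive term proportional to $L(\beta_t)$. Summing,
\begin{equation*}
dD_t \leq -\bigl(2 - C(1+\sigma^2)\gamma\bigr) L(\beta_t)\, dt + dM_t,
\end{equation*}
where $M_t$ is a continuous local martingale with $d\langle M\rangle_t \leq C'\gamma L(\beta_t)\,dt$.

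\textbf{From integrability of $L$ to convergence.} Choosing $\gamma \leq C_0$ small enough makes the deterministic drift at most $-L(\beta_t)dt$. Integrating and combining Doob's maximal inequality with a stochastic Grönwall-type lemma (in the spirit of the appendix of \cite{pesme2021implicit}), one obtains that for any $p$ close to $0$, with probability at least $1-p$,
\begin{equation*}
\sup_{t\geq 0} D_t \leq D_0/p \quad \text{and} \quad \int_0^\infty L(\beta_s)\,ds \leq D_0/p.
\end{equation*}
On this event, \eqref{eq:alpha_noisy} yields $\alpha_t^\beta \to \alpha_\infty^\beta$ with $\alpha_\infty^\beta$ coordinate-wise strictly positive, and the martingale in \eqref{eq:noisy} has finite quadratic variation, hence converges. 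Therefore $\nabla \phi_{\alpha_\infty^\beta}(\beta_t)$ converges, and by strict convexity of $\phi_{\alpha_\infty^\beta}$ so does $\beta_t$, towards some $\beta_\infty$. Integrability of $L(\beta_s)$ together with continuity of $L$ along the (bounded) trajectory forces $L(\beta_\infty) = 0$, i.e.\ $\beta_\infty$ is an interpolator.

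\textbf{Main obstacle.} The new difficulty compared with \cite{pesme2021implicit} is the isotropic noise term $\sqrt{\gamma L(\beta_t)}\sigma\,d\tilde B_t$ in \eqref{eq:noisy}: it escapes $\mathrm{span}(x_1,\dots,x_n)$, so the KKT-based identification of the limit used in the original argument is no longer available, and one can only hope for convergence to an a priori unspecified interpolator, which is exactly what the theorem claims. At the technical level, this extra noise contributes the $\sigma^2 I_d$ term in the Itô correction and an additional decay in $\alpha_t^\beta$, both of which must be absorbed by shrinking the allowed step size $C_0 = C_0(\sigma, \text{data})$; verifying that the Hessian of $\phi_{\alpha_t^\beta}$ stays nondegenerate along the path under this coupled dynamic is the main quantitative step.
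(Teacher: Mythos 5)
Your overall architecture matches the paper's: a Bregman-divergence Lyapunov process for an arbitrary interpolator $\beta^\ast$, an It\^o computation producing a $-2L(\beta_t)\,dt$ drift plus correction terms, a time-uniform control of the martingale part to get $\int_0^\infty L(\beta_s)\,ds<\infty$ on a high-probability event, and then convergence of the iterates via strong convexity. However, there is a genuine gap at the heart of the argument, and you name it yourself without closing it. Your bounds ``It\^o correction $\leq C(1+\sigma^2)\gamma L(\beta_t)$'' and ``$d\langle M\rangle_t \leq C'\gamma L(\beta_t)\,dt$'' are only valid if the trajectory is bounded: the inverse Hessian of the hyperbolic entropy is $4\,\mathrm{diag}(\xi_t)$ with $\xi_t=w_{t,+}^2+w_{t,-}^2$, so the correction is really $\propto \gamma L(\beta_t)\langle \xi_t,\mathrm{diag}(A^TA)\rangle$, and the quadratic variation involves $\Vert\beta_t-\beta^\ast\Vert^2$. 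Using these bounds to deduce $\sup_t D_t<\infty$ and then boundedness of $\beta_t$ is circular. The paper breaks the circularity with three ingredients you omit: a lemma bounding $\Vert\xi_t\Vert_1$ by the Lyapunov value plus a logarithmic term ($\Vert\xi_t\Vert_1\leq 4V_t+\Vert\beta^\ast\Vert_1\ln(\sqrt2\Vert\xi_t\Vert_1/\min\alpha_i^2)$, which requires augmenting $D_t$ with the integral terms appearing in $\alpha_t$), the time-uniform self-normalized inequality $\mathbb{P}(\exists t,\ S_t\geq a+b\langle S\rangle_t)\leq e^{-2ab}$ to define the event $\mathcal{A}$, and a bootstrap/continuity argument (keeping a quantity $U_s\geq 1/2$ by taking $\gamma$ small) that yields boundedness of the iterates for all times on $\mathcal{A}$. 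Without this, your ``supermartingale plus Gr\"onwall'' step is not justified.

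A second, smaller flaw is the final convergence step. You argue that since the martingale in the mirror flow has finite quadratic variation, $\nabla\phi_{\alpha_\infty}(\beta_t)$ converges and hence so does $\beta_t$. But the drift of the mirror flow is $-\int_0^t\nabla L(\beta_s)\,ds$, and $\Vert\nabla L(\beta)\Vert\lesssim\sqrt{L(\beta)}$ near interpolators; integrability of $L(\beta_s)$ does not give integrability of $\sqrt{L(\beta_s)}$, so convergence of $\nabla\phi_{\alpha_t}(\beta_t)$ does not follow. The paper instead proves convergence of the Bregman process $D_t$ (whose bounded-variation part involves $L$, which is integrable), extracts a subsequence $\beta_{\psi(t)}\to\beta_\infty$ with $L(\beta_\infty)=0$ using boundedness, and upgrades to full convergence via $D_{\phi_{\alpha_t}}(\beta_\infty,\beta_t)\to 0$ together with a uniform strong-convexity constant valid on the bounded trajectory. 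So while your strategy is the right one, the proposal as written is a sketch missing its hardest quantitative component (boundedness of the iterates under small $\gamma$) and with an invalid shortcut in the last step.
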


\vspace{-0.15in}
\begin{proof}
    We give here an idea of the proof which is along the lines of the proof found in \cite{pesme2021implicit}. 
We first note that by arbitrarily changing $\Bar{X}^T$ in SDE \eqref{eq:not_noisy}, the process still converges. In our case: 

\vspace{-0.15in}
\begin{equation*}\label{eq:SDE_loss}
    dw_{t,\pm} = - \nabla_{w_\pm} L(w_t)dt + 2\sqrt{\gamma L(w_t)}w_{t,\pm} \cirbd [AdB_{t,\pm}] 
\end{equation*}
\vspace{-0.1in}

where this time $B_t$ is valued in $\mathbb{R}^{d+n}$ and
$A\defeq\left(\Bar{X}|\sigma I_d\right)$. The whole proof is given in Appendix~\ref{appendix:DLN}
\end{proof}
\vspace{-0.1in}

However, because the corresponding mirror flow updates no longer lie in $\text{span}(x_1,...,x_n)$ (\emph{cf.} Equation~\ref{eq:noisy}),  the KKT conditions of  the minimization problem are no longer verified by $\beta_\infty$, and 

\begin{equation}
    \beta_\infty \neq \argmin_{\beta \text{ s.t. } X\beta=Y} \phi_{\alpha^\beta_{\infty}}(\beta)
\end{equation}

Nevertheless, we show that $\beta_\infty$ satisfy the KKT conditions for a small perturbation of that problem:
\begin{proposition}\label{prop:min_problem_DLN}
Under the assumptions of Theorem~\ref{thm:convergence_DLN}, on the set of convergence of $\beta_t$, there exists $r_{\infty}$ s.t.:
  \begin{equation}
\boxed{
    \beta_{\infty} = \argmin_{X\beta=Y} \phi_{\alpha_{\infty}}(\beta) - \langle r_\infty, \beta\rangle
}
\end{equation}

\end{proposition}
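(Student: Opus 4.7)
The approach is to integrate the stochastic mirror flow \eqref{eq:noisy} on $[0,\infty)$ and read off the KKT conditions of a perturbed optimization problem directly from the resulting identity.

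First, I would integrate \eqref{eq:noisy} from $0$ to $T$. Since $w_{0,\pm}=\alpha$ gives $\beta_0=0$ and a direct computation from \eqref{eq:hyperbolic_potential} yields $\nabla\phi_\alpha(0)=0$, this produces
\begin{equation*}
\nabla\phi_{\alpha_T^\beta}(\beta_T) = -\!\int_0^T\!\nabla L(\beta_s)\,ds \;+\; \Bar{X}^T\!\!\int_0^T\!\sqrt{\gamma L(\beta_s)}\,dB_s \;+\; \sigma\!\int_0^T\!\sqrt{\gamma L(\beta_s)}\,d\Tilde{B}_s.
\end{equation*}
Next I would pass to the limit $T\to\infty$ on the convergence event provided by Theorem~\ref{thm:convergence_DLN}. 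By \eqref{eq:alpha_noisy}, $\alpha_t^\beta$ is a continuous function of $\int_0^t L(\beta_s)\,ds$, which is finite on this event (indeed, finiteness of this integral is exactly what makes $\alpha_\infty^\beta$ well-defined and bounded away from $0$). Hence the left-hand side converges to $\nabla\phi_{\alpha_\infty^\beta}(\beta_\infty)$ by joint continuity of $(\alpha,\beta)\mapsto \nabla\phi_\alpha(\beta)$. The two Itô integrals are continuous $L^2$-martingales whose quadratic variations are dominated by $\gamma\int_0^\infty L(\beta_s)\,ds<\infty$, so they converge almost surely by the martingale convergence theorem.

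The decisive geometric observation comes next. Since $\nabla L(\beta)=2\,X^T(X\beta-Y)\in \mathrm{range}(\Bar{X}^T)$, the first two terms on the right-hand side lie in $\mathrm{span}\{x_1,\dots,x_n\}$, while the isotropic stochastic integral
\begin{equation*}
r_\infty \defeq \sigma\int_0^\infty\!\sqrt{\gamma L(\beta_s)}\,d\Tilde{B}_s
\end{equation*}
is an unrestricted element of $\mathbb{R}^d$. Rearranging yields $\nabla\phi_{\alpha_\infty^\beta}(\beta_\infty) - r_\infty = \Bar{X}^T\lambda_\infty$ for some $\lambda_\infty\in\mathbb{R}^n$. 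Combined with $X\beta_\infty=Y$ (the interpolation conclusion of Theorem~\ref{thm:convergence_DLN}), this is exactly the KKT system for the convex program $\min_{X\beta=Y}\{\phi_{\alpha_\infty^\beta}(\beta)-\langle r_\infty,\beta\rangle\}$. Strict convexity of the perturbed objective, inherited from $\phi_{\alpha_\infty^\beta}$ because the perturbation is linear, guarantees uniqueness of the KKT point, giving $\beta_\infty = \argmin_{X\beta=Y}\{\phi_{\alpha_\infty^\beta}(\beta)-\langle r_\infty,\beta\rangle\}$.

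The main obstacle I expect is justifying the a.s.\ convergence of the two stochastic integrals and of $\alpha_t^\beta$, all three of which reduce to controlling $\int_0^\infty L(\beta_s)\,ds<\infty$ on a high-probability event; this is exactly what the upstream proof of Theorem~\ref{thm:convergence_DLN} is built to supply, mirroring the argument of \cite{pesme2021implicit}. Once that input is available, the remaining work is the essentially algebraic splitting of the RHS into a ``span-constrained'' part and the isotropic residual $r_\infty$, followed by the standard KKT/uniqueness argument above.
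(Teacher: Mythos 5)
Your proposal is correct and follows essentially the same route as the paper: integrate the stochastic mirror flow, observe that the drift and the structured noise term lie in $\mathrm{span}(x_1,\dots,x_n)$, and identify the leftover stochastic integral as the linear perturbation whose KKT system $\beta_\infty$ satisfies, concluding by strict convexity. The only (harmless) difference is that you take $r_\infty$ to be the full isotropic integral $\sigma\int_0^\infty\sqrt{\gamma L(\beta_s)}\,d\Tilde{B}_s$, whereas the paper keeps only its component orthogonal to the span, $r_\infty=\sigma\int_0^\infty\sqrt{\gamma L(\beta_s)}(1-P)\,dB_s$ (and rephrases the objective as a Bregman divergence from $\Tilde{\beta}_0=\nabla\phi_{\alpha_\infty}^{-1}(r_\infty)$); since any span component of $r_\infty$ contributes a constant on the feasible set $\{X\beta=Y\}$, both choices prove the stated existence, though the paper's projected version is the one later used to bound $\Vert r_\infty\Vert$.
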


\begin{proof}
Let us consider $P$ the orthogonal projection over span$(x_1, ..., x_n)$. Integrating \eqref{eq:noisy} and decomposing the right hand side as a term $v\in$span$(x_1, ..., x_n)$ and a term in the orthogonal, we get:

\vspace{-0.15in}
\begin{equation}
    \nabla\phi_{\alpha_{\infty}}(\beta_{\infty}) =  v + \int_{0}^\infty \sigma \sqrt{\gamma L(\beta_{s})} (1-P) dBs
\end{equation}
\vspace{-0.1in}

Let us define $r_{\infty} = \int_{0}^\infty \sigma \sqrt{\gamma L(\beta_{s})} (1-P) dBs$, and $\Tilde{\beta}_0=\nabla\phi_{\alpha_{\infty}}^{-1}(r_{\infty})$, which is well defined by strong convexity of $\phi_{\alpha_\infty}$ (Refer to Appendix~\ref{appendix:DLN}). We  have:

\vspace{-0.15in}
\begin{equation}
    \nabla\phi_{\alpha_{\infty}}(\beta_{\infty})-\nabla\phi_{\alpha_{\infty}}(\Tilde{\beta}_0) \in \text{span}(x_1, ..., x_n)
\end{equation}
\vspace{-0.1in}

Thus, by defining $D$ the following Bregman divergence:
  \begin{equation*}
    D_{\phi_{\alpha_{\infty}}}(\beta, \Tilde{\beta}_0) = \phi_{\alpha_{\infty}}(\beta)-\phi_{\alpha_{\infty}}(\Tilde{\beta}_0) - \langle \nabla\phi_{\alpha_{\infty}}(\Tilde{\beta}_0), \beta- \Tilde{\beta}_0\rangle
\end{equation*}
\vspace{-0.1in}

The gradient taken at $\beta_\infty$ lies in $\text{span}(x_1, ..., x_n)$, insuring that the KKT conditions of the following problem are verified and thus:

\vspace{-0.15in}
\begin{equation}\label{eq:divergence}
\boxed{
    \beta_{\infty} = \argmin_{X\beta=Y}D_{\phi_{\alpha_{\infty}}}(\beta, \Tilde{\beta}_0)
}
\end{equation}
\vspace{-0.1in}
\end{proof}
\vspace{-0.1in}


Therefore, Noisy-SGD gives the same solution as GD but on a different potential $\phi_{\alpha_{\infty}}$ and from an effective initialization $ \Tilde{\beta}_0$, while SGD only changes $\alpha$.
For Noisy-SGD, we note that $\alpha_{\infty}$ decreases with $\sigma$ (eq.~\ref{eq:alpha_noisy}): the more noise is added, the smaller is the effective $\alpha$.
If it did not change the effective initialization too, it would directly imply ``better" implicit bias.

\vspace{-0.1in}
\paragraph{Implicit Bias}We have observed that $\beta_\infty$ is the solution to a perturbed minimization problem. 
We show now that the distance between $\beta_\infty$ and the optimizer of $\phi_{\alpha_\infty}$ can be controlled by the perturbation $\sigma$:

\begin{proposition}\label{eq:impact_eff_init}
    Under the assumptions of theorem \ref{thm:convergence_DLN} and on the high probability set of convergence of $(\beta_t)$ let $\beta^\ast_{\alpha_{\infty}} := \argmin_{X\beta = Y} \phi_{\alpha_\infty}(\beta)$ and $r_{\infty} := \sigma \int_{0}^\infty \sqrt{\gamma L(\beta_{s})} (1-P) dBs$. Since $\alpha_\infty > 0$, $\phi_{\alpha_\infty}$ is $\mu$ strongly convex for $\mu > 0$. Then we have:
    \begin{equation}
        \frac{1}{\mu}\Vert r_\infty \Vert_2 \geq \Vert \beta^\ast_{\alpha_{\infty}} - \beta_\infty \Vert_2
    \end{equation}
\end{proposition}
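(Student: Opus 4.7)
The plan is to combine the two KKT characterizations of $\beta_\infty$ and $\beta^*_{\alpha_\infty}$ with strong convexity of $\phi_{\alpha_\infty}$, exploiting the orthogonality between $\mathrm{span}(x_1,\ldots,x_n)$ and its complement (which is $\ker X$).

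First, I would write down both optimality conditions. Since $\beta^*_{\alpha_\infty}$ minimizes $\phi_{\alpha_\infty}$ subject to $X\beta=Y$, its KKT condition gives $\nabla \phi_{\alpha_\infty}(\beta^*_{\alpha_\infty}) \in \mathrm{span}(x_1,\ldots,x_n)$. From the proof of Proposition~\ref{prop:min_problem_DLN}, $\beta_\infty$ minimizes $\phi_{\alpha_\infty}(\beta) - \langle r_\infty, \beta\rangle$ subject to the same constraint, so $\nabla \phi_{\alpha_\infty}(\beta_\infty) - r_\infty \in \mathrm{span}(x_1,\ldots,x_n)$. Subtracting yields
\begin{equation*}
    \nabla \phi_{\alpha_\infty}(\beta_\infty) - \nabla \phi_{\alpha_\infty}(\beta^*_{\alpha_\infty}) = r_\infty + v,
\end{equation*}
for some $v \in \mathrm{span}(x_1,\ldots,x_n)$.

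Second, I would invoke $\mu$-strong convexity of $\phi_{\alpha_\infty}$ in the standard form
\begin{equation*}
    \mu \|\beta_\infty - \beta^*_{\alpha_\infty}\|_2^2 \leq \langle \nabla \phi_{\alpha_\infty}(\beta_\infty) - \nabla \phi_{\alpha_\infty}(\beta^*_{\alpha_\infty}),\, \beta_\infty - \beta^*_{\alpha_\infty}\rangle.
\end{equation*}
The key observation is that both $\beta_\infty$ and $\beta^*_{\alpha_\infty}$ are interpolators, so $\beta_\infty - \beta^*_{\alpha_\infty} \in \ker X = \mathrm{span}(x_1,\ldots,x_n)^\perp$. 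Consequently $\langle v, \beta_\infty - \beta^*_{\alpha_\infty}\rangle = 0$, and the inner product on the right reduces to $\langle r_\infty, \beta_\infty - \beta^*_{\alpha_\infty}\rangle$. (Note that $r_\infty$ itself lies in $\mathrm{span}(x_1,\ldots,x_n)^\perp$ since it is produced by the projection $(1-P)$, so this inner product is consistent with the orthogonal decomposition.)

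Finally, Cauchy--Schwarz bounds $\langle r_\infty, \beta_\infty - \beta^*_{\alpha_\infty}\rangle \leq \|r_\infty\|_2\, \|\beta_\infty - \beta^*_{\alpha_\infty}\|_2$, and dividing through by $\|\beta_\infty - \beta^*_{\alpha_\infty}\|_2$ (the bound being trivial otherwise) yields the claimed inequality $\|\beta^*_{\alpha_\infty} - \beta_\infty\|_2 \leq \mu^{-1}\|r_\infty\|_2$. There is no real obstacle: the argument is a clean perturbation bound, and the only subtle point is checking that the Lagrange-multiplier part $v$ is annihilated by the interpolation constraint, which is exactly the orthogonality between the row space of $X$ and the difference of two interpolators.
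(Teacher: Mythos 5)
Your proof is correct, and it takes a recognizably different route from the paper's. You work at the level of gradients: you write down the KKT stationarity conditions of the two constrained problems ($\nabla\phi_{\alpha_\infty}(\beta^*_{\alpha_\infty})\in\mathrm{span}(x_1,\dots,x_n)$ and $\nabla\phi_{\alpha_\infty}(\beta_\infty)-r_\infty\in\mathrm{span}(x_1,\dots,x_n)$, the latter being exactly what the integrated mirror-flow decomposition in Proposition~\ref{prop:min_problem_DLN} provides), subtract them, apply the monotonicity form of $\mu$-strong convexity, and kill the Lagrange-multiplier term via the orthogonality $\beta_\infty-\beta^*_{\alpha_\infty}\in\ker X=\mathrm{span}(x_1,\dots,x_n)^\perp$, finishing with Cauchy--Schwarz. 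The paper instead lifts the constraint into the objective by setting $\phi=\phi_{\alpha_\infty}+\iota_{X\beta=Y}$, argues that this composite is still $\mu$-strongly convex, and runs a function-value argument: the first-order strong-convexity lower bound at $\beta_\infty$ (which minimizes $\phi-\langle r_\infty,\cdot\rangle$), combined with optimality of $\beta^*_{\alpha_\infty}$ for $\phi$, gives $\langle r_\infty,\beta_\infty-\beta^*_{\alpha_\infty}\rangle\geq\mu\Vert\beta_\infty-\beta^*_{\alpha_\infty}\Vert^2$, then Cauchy--Schwarz. Your version avoids handling the nonsmooth indicator and subdifferentials, at the price of explicitly invoking KKT for both constrained problems; the paper's version avoids the explicit orthogonality bookkeeping and extends verbatim to the case where one only has subgradients. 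Both rely on the same caveat, stated as a hypothesis in the proposition, that $\phi_{\alpha_\infty}$ is only strongly convex on a bounded set containing the two points, and both yield the same constant.
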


\vspace{-0.15in}
\begin{proof}
See Appendix~\ref{appendix:DLN}.
\end{proof}
\vspace{-0.1in}

We observe that although increasing $\sigma$ implies a lower $\alpha_\infty$ than SGD, which should yield to sparser solutions, it also implies a new effective initialization $\Tilde{\beta}_0$ that deviates the solution from the sparse $\beta^\ast_{\alpha_{\infty}}$.
However, in Figure~\ref{fig:Figure3}, we show that Noisy-SGD still leads to sparser solutions than SGD.
This might be attributed to the fact that the gain from a smaller $\alpha$ is in this case more important than the perturbation controlled by $\frac{1}{\mu}\Vert r_\infty \Vert \propto \sigma$, as shown in Figure~\ref{fig:Figure4}.
We give more details in the experiment section~\ref{sec:DLN_exp}.

Proofs are derived in Appendix~\ref{sec:appendix_DLN_proofs} with more general noise forms, with stability results for the implicit bias.

\begin{figure}[b!]
    \centering
    \includegraphics[width=\columnwidth]{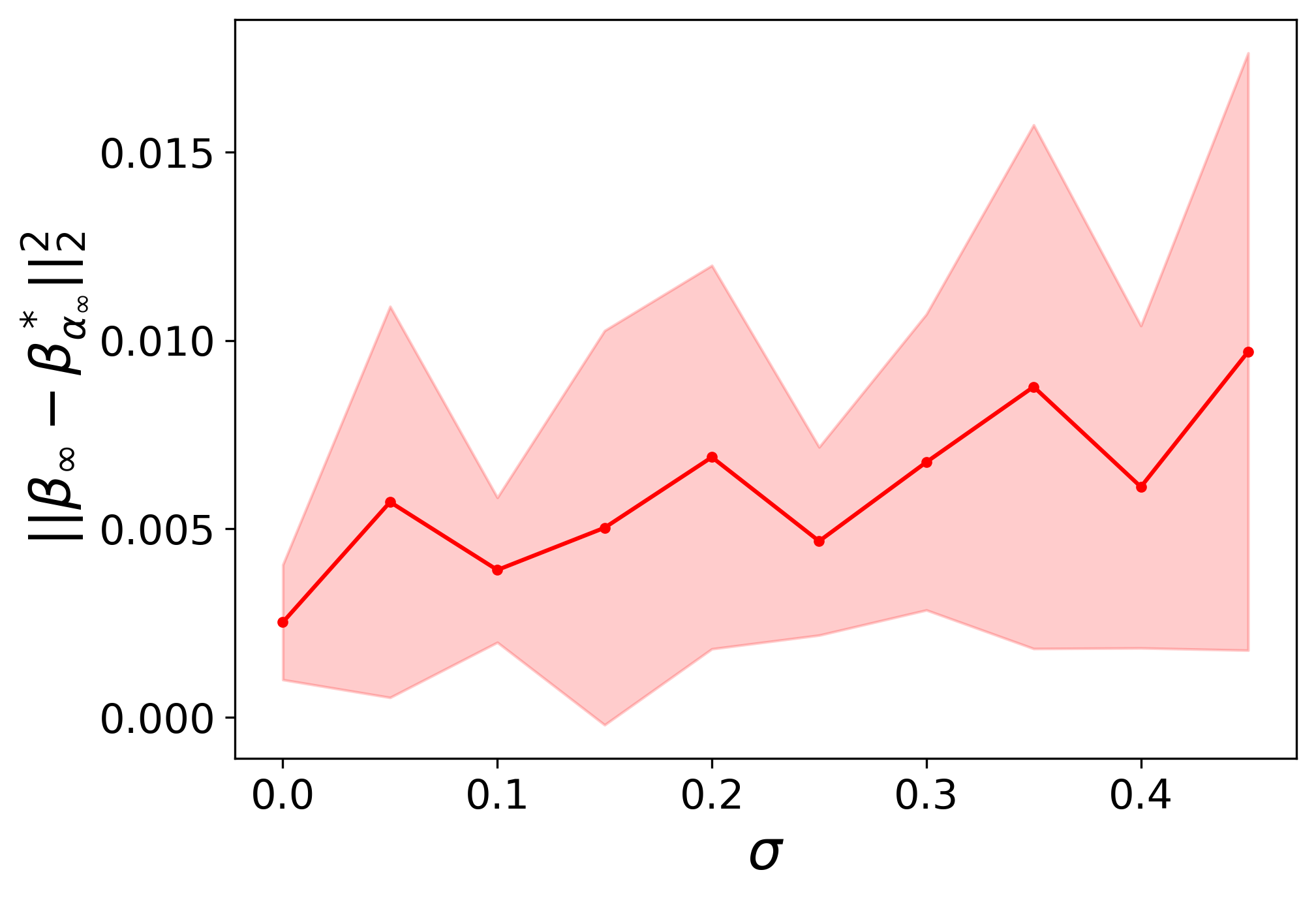}
    \vspace{-1em}
    \caption{
     DLN: Distance between $\beta^*_{\alpha_\infty}$, the solution that minimizes $\phi_{\alpha_\infty}$ ---obtained by GD from $\alpha_\infty$--- and the one obtained by Noisy-SGD (see Proposition~\ref{eq:impact_eff_init}).
    Shaded areas represent one standard deviation over 10 runs.
    For small $\sigma$, the distance is smaller than the distance between the solutions of SGD and the sparse solution $\beta_{l_0}$ (see Figure~\ref{fig:Figure3}), explaining why the implicit bias persists and can be enhanced by Gaussian noise.
 \vspace{-1em}
 }
    \label{fig:Figure4}
\end{figure}

\label{sec:method}

\section{Experiments}

\subsection{Noisy SGD on ImageNet}\label{sec:noisy_SGD_ImageNet}

\paragraph{Dataset and architecture} To compare Noisy-SGD to DP-SGD, we adopt the exact same set-up as \cite{sander2023tan}.
We train a Normalizer-Free ResNets (NF-ResNets)~\citep{NFnet} with $d=25$M parameters on the ImageNet-1K dataset~\citep{Imagenet,Imagenet2} with blurred faces, which contains $1.2$ million images partitioned into 1000 categories. 
We use the timm \citep{rw2019timm} library based on Pytorch \citep{paszke2019pytorch}.

\paragraph{Optimization}For SGD and Noisy-SGD, we use a constant learning rate of $0.5$ and no momentum. 
We perform an exponential moving average of the weights \citep{tan2019efficientnet} with a decay parameter of $0.999$ (similar to \cite{sander2023tan}).
We set the effective noise $\sigma/B$ constant to $6\times10^{-4}$ for all Noisy-SGD experiments of Figure~\ref{fig:Figure1}, which is 4 times greater than the noise used by \cite{sander2023tan} for ImageNet.

\paragraph{Implicit bias and Noise level}In Figure~\ref{fig:grad_norm}, we show that the additional Gaussian noise has a bigger $l_2$ norm than the gradients throughout training, for the same set-up as in Figure~\ref{fig:Figure1}.
It is an important observation because it shows that the additional Gaussian noise is not negligible, and thus that a similar implicit bias to the one of SGD persists even under strong Gaussian perturbation.
Moreover, we notice that the training trajectories of SGD and Noisy-SGD do differ in Figure~\ref{fig:Figure1}, especially at the beginning. 
However, the gradient norm $||\frac{1}{B} \sum_{i \in \mathcal{B}_k} \nabla_{\bm{\theta}} \ell_i(\bm{\theta}_{k})||$ increases during training, while the corresponding quantity in DP-SGD is bounded by 1 (\emph{cf.} Equation~\ref{eq:DP_SGD} when $C=1$), which makes the actual signal-to-noise ratio greater in Noisy-SGD compared to DP-SGD.

\subsection{Diagonal Linear Network}\label{sec:DLN_exp}

We adopt the same set-up as \cite{pesme2021implicit} for sparse regression.
We select parameters n = 40 and d = 100, and then create a sparse model $\beta^*_{l_0}$ with the constraint that its $l_0$ norm is equal to 5. 
We generate the features $x_i$ from a normal distribution with mean 0 and identity covariance matrix $N(0, I)$, and compute the labels as $y_i = x_i^T \beta^*_{l_0}$. 
We always use the same step size of $\gamma = 1/(1.3||\Bar{X}\Bar{X}^T||_2)$. Notice that $||\beta_t - \beta^*_{l_0}||^2_2$ is the validation loss in the experiments.

\paragraph{Noisy-SGD's improved implicit Bias}In Figure~\ref{fig:Figure3}, we show that noisy-SGD produces sparser solutions than the one obtaimed with SGD, as it is closer to the sparse interpolator $\beta_{l_0}$.
This effect is primarily due to the impact of having a smaller effective value $\alpha$ which outweighs the impact of the perturbation governed by the new effective initialization $\Tilde{\beta_0}$: $\frac{1}{\mu}\Vert r_\infty \Vert \propto \sigma$ (see Equation~\ref{eq:impact_eff_init}).
In Appendix~\ref{sec:appendix_additional_exps}, we also illustrate scenarios where the parameter $\sigma$ becomes more prominent in comparison to $\alpha$, and for which Noisy-SGD still induce a stronger implicit bias.

\paragraph{Impact of $\sigma$} In Figure~\ref{fig:Figure4}, we run Noisy-SGD for different values of $\sigma$, $10$ times for each value, and show the averages and standard deviations. 
As expected from Proposition~\ref{eq:impact_eff_init}, the distance between $\beta^*_{\alpha_{\infty}}$, the actual minimizer under constraints of $\phi_{\alpha_\infty}$ and $\beta_\infty$, the solution obtained from Noisy-SGD, increases with $\sigma$. 
We observe that the order of magnitude of this increase, which was hidden inside the constants of the Proposition, is reasonable when compared to the distance between the solution of SGD and the sparse interpolator, as depicted in Figure~\ref{fig:Figure3}.
It explains why using $\sigma=0.5$ still helps when starting from $\alpha=0.1$.

\section{Conclusion}

The challenge of achieving strong performance in large-batch DP-SGD training remains a critical obstacle in balancing privacy and utility. 
This performance bottleneck in DP-SGD is poised to become more pronounced over time as the theoretical solution to further bridging the performance gap between private and non-private training lies in the expansion of both training set size and batch size \citep{sander2023tan}.

We have observed that Noisy-SGD exhibits a performance decline with the batch size, indicating that the gradient noise in SGD still plays a pivotal role, even with Gaussian perturbation. 
It puts in perspective a key argument supporting that the inherent geometry of SGD's noise is responsible for its bias. 
For DLNs, a seemingly minor addition of Gaussian noise disrupts a crucial KKT condition, necessitating a complex mathematical alternative. 
We've derived proofs for broader noise in Appendix~\ref{sec:appendix_DLN_proofs} and offer stability results that extend the amplification of the implicit bias.
Overall, our study underscores two critical points:

\begin{itemize}
    \item The specific gradient noise geometry in SGD is robust to Gaussian perturbations, and different geometries can lead to different implicit biases
    \item Further work could try to leverage methods developed to improve large-batch training in non-private settings (\emph{e.g.}, employing the LARS optimizer \citep{you2017large} for convolutional DNNs) to enhance the performance of DP-SGD, thus advancing the trade-off between privacy and utility. 
\end{itemize}

\vspace{-0.5cm}
\section{Discussion}\label{sec:discussion}


Our decision to study Noisy-SGD as a substitute for DP-SGD is primarily based on the observations depicted in figure~\ref{fig:Figure1}. 
Notably, a similar implicit bias of DP-SGD is discernible even in the absence of clipping.
We delve here into the implications of using Noisy-SGD as a stand-in DP-SGD.
This includes considerations from both optimization and privacy perspectives.

\paragraph{An optimization perspective.} 
Noisy-SGD only aligns with DP-SGD if the gradients are bounded, an assumption that is commonly done in convergence studies~\citep{bassily2014private}.
This is not the case for classical neural networks, nor for the DLN case that we studied. 
Without any assumption, the clipping operation can bias the (expected) direction of the update.
However, assuming symmetricity of the gradients, which is a reasonable assumption~\citep{chen2020understanding}, the drift maintains its direction after clipping. 

If we take clipping into account for DLNs while assuming symetricity, the noise term can still be approximated by $\sigma_t \circ w_t dB_t$, where $\sigma_t$ now depends on $w_t$.
Indeed, if we denote $y_{t} :=\langle \beta - \beta^\ast,x_{i_t}\rangle x_{i_t}$, we get $ w_{t+1} = w_t - \gamma \mathbb{E}[\text{clip}(y_{t}\circ w_t)] - \gamma(\text{clip}( y_{t} \circ w_t) - \mathbb{E}[\text{clip}(y_{t}\circ w_t)])$. The mean zero term rewrites as $\alpha(y_{t},w_t) y_{t} \circ w_t - \mathbb{E}[\alpha(y_{t},w_t) y_{t} \circ w_t]$, with first order of the covariance$(1/n) \text{diag}(w_t) \alpha^2 \mathbb{E}[y_{t} y_{t}^T ]\text{diag}(w_t)$. 
Our extension in appendix~\ref{sec:appendix_DLN_proofs} is a first approximation. 

\paragraph{A privacy perspective.}We deliberately focus on the sole effect of noise addition and its impact on the optimization procedure. 
Nevertheless, we do theoretically show how the noise level impacts our results, with additional experiments presented in Appendix~\ref{sec:appendix_additional_exps}, and general forms in appendix~\ref{sec:appendix_DLN_proofs}.
However, vanishing noise would always lead to exploding privacy guarantees. 
We use this modelisation as a first approximation, as it changes the structure of the noise similarly to DP.
Moreover, it is necessary for convergence as the learning is constant; studying a decreasing learning rate with fixed noise could have been an alternative, but we should have taken a different angle (unknown) to study the bias.
In a similar vein, one could consider the clipping component more comprehensively. 
One potential approach could be to proportionally decrease the clipping value in tandem with the noise magnitude. 
This method would not only uphold the privacy guarantees but also ensure convergence

\section*{Acknowledgements}
Our sincere thanks to Pierre Stock and Alexandre Sablayrolles for their initial guidance, Scott Pesme for his critical insights on the mathematical derivations of the DLN part, and Ilya Mironov for his constructive feedback on our draft.

\label{sec:experiments}

\bibliography{biblio}
\bibliographystyle{unsrtnat}

\newpage

%
%




%

%


\appendix
\onecolumn
\aistatstitle{Supplementary Materials}

\section{Linear Least Square}\label{appendix:least}

We detail the proofs of the Propositions presented in Section~\ref{sec:Linear} focusing on the continuous analysis of the implicit bias of SGD and Noisy-SGD in the Linear Least Square Setting. 
We start with the under parametrized case in Section~\ref{sec:appendix_least_under} where the number of parameters $d$ is smaller than the number of training data points $n$. 
Subsequently, we delve into the over parameterized case in  Section~\ref{sec:appendix_least_over}.

\subsection{Under Parameterized Case}\label{sec:appendix_least_under}

\paragraph{SDE for SGD} In situations where the model is underparameterized, the gradient noise encompasses the entire space of $\mathbb{R}^d$ and there is a specific value of $\epsilon>0$ such that $||X\theta_t - Y||$ remains greater than $\epsilon$ for all time steps $t$, as documented in \cite{ali20a}.
We thus consider the following SDE as the continuous approximation of SGD in the linear least square under parametrized setting.
In this framework, the Euler discretization with a step size of $\gamma$ corresponds to the implementation of SGD, incorporating the noise modelization previously described:

\begin{equation}
    d\theta_t = -\Bar{X}^{T}(\Bar{X}\theta_t-Y)dt + \sqrt{\gamma}\epsilon\Bar{X}^TdW_t
\end{equation}

This process is a Orsnten-Uhlenbeck process that can be written as:

\begin{equation}
    d\theta_t = \beta(\mu - \theta_t)dt + \Sigma dW_t
\end{equation}

with $\mu = (X^{T}X)^{-1}X^TY =X^{\dagger} Y := \theta^{LS}$ for $X^\dagger$ the pseudo-inverse of $X$,  $\beta = \Bar{X}^{T}\Bar{X}$ and $\Sigma=\sqrt{\gamma}\epsilon\Bar{X}^T$. 

The stationary distribution of this temporally homogeneous Markov Chain is:

\begin{equation}
    \theta_{\infty}  = \mathcal{N}(\theta^{LS}, w)
\end{equation}

for $w$ that verifies the following Lyapunov equation:

\begin{equation}
    \beta w + w\beta^T = 2D = \gamma\epsilon^2\beta
\end{equation}

For $D = \Sigma^T\Sigma/2 = \frac{\gamma\epsilon^{2}}{2}\beta$. The solution of this Lyapunov equation can be directly solved as:

\begin{equation}
    w = \int_{0}^{\infty} e^{-t\beta}(-2D)e^{-t\beta} \,dt = \int_{0}^{\infty} e^{-t\beta}(-\gamma\epsilon^2\beta)e^{-t\beta} \,dt = -\gamma\epsilon^2\beta \int_{0}^{\infty} e^{-2t\beta} \,dt = \frac{\gamma \epsilon^2}{2} I_d
\end{equation}

As $\int_{0}^{\infty} e^{-2t\beta} \,dt=\beta^{-1}/2$.
The commutativity results from the fact that all matrices are sums of powers of $\beta$ within the integrals.

\paragraph{Noisy-SDE}Let us now consider adding Gaussian noise to the modelisation of the natural SGD noise:

\begin{equation}
    d\theta_t = -\Bar{X}^{T}(\Bar{X}\theta_t-Y)dt + \sqrt{\gamma}\epsilon\Bar{X}^TdW_t + \sigma d\Tilde{W_t}
\end{equation}

we get the same equations for $\Sigma = (\sqrt{\gamma}\epsilon\Bar{X}^T | \sigma I_d) \in \mathbf{R}^{d\times(n+d)}$ and $W = (W^T|\Tilde{W}^T)^T$. 

So $D = \frac{\Sigma\Sigma^T}{2} = \frac{\frac{\gamma}{n}\epsilon^2X^TX + \sigma^2I_d}{2} = \frac{\gamma\epsilon^2}{2}\beta + \frac{\sigma^2}{2} I_d$, and in this case:

\begin{equation}
    w = \int_{0}^{\infty} e^{-t\beta}(-\gamma\epsilon^2\beta)e^{-t\beta} \,dt - \frac{\sigma^2}{2}(-\frac{1}{2}\beta^{-1})= \frac{\gamma \epsilon^2}{2} I_d + \frac{\sigma^2}{4}\beta^{-1}
\end{equation}

With $\beta^{-1} = (\Bar{X}^T\Bar{X})^{-1}$. Finally:

\vspace{-0.1in}
\begin{equation}
\boxed{
    \theta_{\infty} \sim \mathcal{N}\left(\theta^ {LS}, \frac{\gamma \epsilon^2}{2} I_d + \frac{\sigma^2}{4}(\Bar{X}^{T}\Bar{X})^{-1}\right)
}
\end{equation}
\vspace{-0.1in}

In this case, we observe that adding spherical noise adds a dependence on the data distribution to the variance term.

\subsection{Over Parameterized Case}\label{sec:appendix_least_over}

In the overparametrised regime, the noise vanishes at every global optimum $\theta^\star$, and is degenerate in the directions of $\text{Ker}(X)$, giving the following continuous approximation \citep{pillaud2022, ali20a}:

\vspace{-0.15in}
\begin{equation}
    d\theta_t = -\Bar{X}^{T}(\Bar{X}\theta_t-Y)dt + \sqrt{\gamma}\norm{X\theta_t - Y}_2\Bar{X}^T dB_t
\end{equation}
\vspace{-0.1in}

In this case, $\theta_t$ converges almost surely to $\theta^{LS}$ for $\gamma >0$ and $\theta_0=0$. Let us now consider the following noisy version of SGD, with an additional noise that has the same magnitude as the gradient noise:

\vspace{-0.15in}
\begin{equation*}
\begin{split}
    d\beta_t & = -\Bar{X}^{T}(\Bar{X}\beta_t-Y)dt + \sqrt{\gamma}\norm{X\beta_t - Y}_2\Bar{X}^T dB_t  \\
   & + \space \sqrt{\gamma}\norm{X\beta_t - Y}_2 \sigma d\Tilde{B}_t 
\end{split}
\end{equation*}
\vspace{-0.15in}

We show that we can control the difference between the iterates of the first SDE and its noisy version: $\eta_t = \norm{\theta_t - \beta_t}_2^2$. More precisely,

    If $\gamma \leq 2/Tr(\Bar{X}^T\Bar{X})$ then
    \begin{equation*}
\boxed{
    \mathbf{E}(\eta_t)\leq \gamma d\sigma^2 \int_{0}^{t} L(\beta_t) ds
}
\end{equation*}


\begin{proof}

We can write $d\beta_t = -\Bar{X}^{T}(\Bar{X}\beta_t-Y)dt + \sqrt{\gamma}\norm{X\beta_t - Y}_2(\Bar{X}^T|\sigma I_d)dB'_t$ for $B' = (\frac{B}{\Tilde{B}})\in\mathbb{R}^{n+d}$. 
Similarly to $\theta$, for $\beta_0=0$ and $\gamma>0$, $\beta$ converges almost surely. 
So $\theta-\beta$ converges and finally $\eta$ converges.
Applying Itô's formula, we get:

\begin{equation*}
\begin{split}
    d\eta_t & = [-2\norm{\Bar{X}(\theta_t-\beta_t)}^2 \\ & + \gamma Tr(\Bar{X}^T\Bar{X})(\sqrt{R_n(\theta_t)}-\sqrt{R_n(\beta_t)})^2 \\
    & + d\sigma^2\gamma R_n(\beta_t)]dt + ...dW_t + ... d\Tilde{W}_t
\end{split}
\end{equation*}

Straightforwardly using the triangular inequality, we have:

\begin{equation}\label{eq:triangular_ineq}
    (\sqrt{R_n(\theta_t)} - \sqrt{R_n(\beta_t)})^2 \leq ||\Bar{X}(\theta_t-\beta_t)||^2_2 = ||\Bar{X}\theta_t - Y - (\Bar{X}\beta_t - Y)||^2_2 \leq (\sqrt{R_n(\theta_t)} + \sqrt{R_n(\beta_t)})^2
\end{equation}

Looking now at the expected value, the Brownian motions disappear and injecting Inequality~\ref{eq:triangular_ineq}:

\begin{equation*}
\begin{split}
    \mathbb{E}(\eta_t) & =  \gamma d\sigma^2 \int_{0}^{t} R_n(\beta_s)ds - 2\int_{0}^{t}\norm{\Bar{X}(\theta_s-\beta_s)}_2^2ds + \gamma Tr(\Bar{X}^T\Bar{X}) \int_{0}^{t}(\sqrt{R_n(\theta_s)}-\sqrt{R_n(\beta_s)})^2 \\ & \leq \gamma d\sigma^2 \int_{0}^{t} R_n(\beta_t)ds + (\gamma Tr(\Bar{X}^T\Bar{X})-2)\int_{0}^{t} \norm{\Bar{X}(\theta_s-\beta_s)}^2ds
\end{split}
\end{equation*}

Finally, for $\gamma<2/Tr(\Bar{X}^T\Bar{X})$, 

\begin{equation*}
\boxed{
    \mathbf{E}(\eta_t)\leq \gamma d\sigma^2 \int_{0}^{t} R_n(\beta_s) ds = \gamma d\sigma^2 \int_{0}^{t} L(\beta_s) ds
}
\end{equation*}

and in particular, 

\begin{equation*}
\boxed{
    \mathbf{E}(\eta_\infty)\leq \gamma d\sigma^2 \int_{0}^{\infty} L(\beta_s) ds
}
\end{equation*}

\end{proof}

\section{Diagonal Linear Network}\label{appendix:DLN}

In this Section, we first derive in Section~\ref{sec:appendix_DLN_proofs} the mathematical proofs of the Propositions and Theorems presented in Section~\ref{sec:DLN}.  
We then show more empirical results on the sparsity 
 of the solutions obtained with Noisy-SGD, when training a DLN.

\subsection{Mathematical proofs}\label{sec:appendix_DLN_proofs}









\subsubsection{Modelisation}
Recall that the weights $(w_{t,\pm})_{t\geq 0}$ are defined through the following SDE
\begin{equation*}
        dw_{t,\pm} =  \pm [\Bar{X}^T r(w_t)]\cirbd w_{t,\pm}dt + 2\sqrt{\gamma L(w_t)}w_{t,\pm} \cirbd \Bar{X}^TdB_{t,\pm}  + 2\sigma\sqrt{\gamma L(w_t)}w_{t,\pm} \cirbd d\Tilde{B}_{t,\pm}
\end{equation*}
The Euler discretization of that SDE with step size $\gamma$ is exactly
\begin{equation*}
        w_{t+1,\pm} = w_{t,\pm} - \gamma\nabla_{w_{t,\pm}} L(w_t) \pm 2\sqrt{\gamma L(w_t)}[\bar{X}^T\epsilon]\cirbd w_{t, \pm}   \pm \gamma \sigma_t Z_{t,\pm} \cirbd w_{t, \pm},
\end{equation*}
Where $\epsilon \sim \mathcal{N}(0, \sqrt{\gamma}Id)$. As seen in \cite[Appendix A]{pesme2021implicit} the covariance of $ 2\sqrt{n L(w_t)}\frac{1}{\sqrt{\gamma}}\epsilon $ is equal to the one of $\zeta_{i_t}$. Where 
\begin{equation*}
    \zeta_{i_t} = -\left(\langle \beta - \beta^\ast,x_{i_t}\rangle e_{i_t}- \mathbb{E}_{i_t}[\langle \beta - \beta^\ast,x_{i_t}\rangle e_{i_t}]\right)
\end{equation*}
with $i_t$ the random index chosen for the step of SGD. Thus the discretization scheme defines a Markov-Chain whose noise is the one found in equation \eqref{eq:noisy_SGD_DLN} (the other term being independant from the rest).

\subsubsection{Proof of Proposition 1}
This section and the following one essentially adapt the proof of a similar result found in \cite{pesme2021implicit} in order to generalize it to more complex noise pattern. In particular extend it to a noise which does not depend on the loss and thus the geometry of the gradients.
We consider throughout this section a more general setting. Let $A \in \mathbb{R}^{p\times d}$ which in the main text is either equal to $\bar{X}$ and thus $p=n$ or to $\left(\bar{X}\mid \sigma I_d\right)$ and $p = d+n$. And let a covariance noise matrix $(\sigma_t)$ which is deterministic and valued in $\mathbb{R}^{p'\times d}$. Set $(w_{t,\pm})$ solutions of the following SDE with $w_{0,\pm} = \alpha$.
\begin{equation*}
\begin{split}
    d w_{t,\pm} &= - \nabla_{w_\pm}L(\beta_t)dt \pm 2\sqrt{\gamma L(\beta_t)} w_{t,\pm} \cirbd \left[A^T dB_t\right]\pm 2w_{t,\pm}\cirbd\left[\sigma_t^Td\tilde{B}_t\right]\\
    &= \pm \left(-\bar{X}^T r(\beta_t) \cirbd w_{t,\pm} dt+ 2\sqrt{\gamma L(\beta_t)} w_{t,\pm} \cirbd \left[A^TdB_t\right]+ 2 w_{t,\pm} \cirbd \left[\sigma_t^Td\tilde{B}_t\right]\right)
\end{split}
\end{equation*}
where $r(w) = \bar{X}(w_+^2-w_-^2- \beta^\ast)$ and $(B_t)_t,(\tilde{B}_t)_t$ are two independant standard brownian motions respectively valued in $\mathbb{R}^p$ and $\mathbb{R}^{p'}$. This setting accounts for a decaying noise which has a geometry different to the one of the data as well a as a small noise which is solely of the magnitude of the parameters.
\begin{lemma}\label{lem:formule_beta_sinh}
    Consider the iterates $(w_t)$ defined above \eqref{eq:DLN_SGD_optim} then $(\beta_t = w_{t,+}^2-w_{t,-}^2)$ satisfies
    \begin{equation*}
        \beta_t =   2\alpha_t^2 \text{sinh}(2\eta_t + 2\delta_t)
    \end{equation*}
    where 
\begin{equation*}
    \eta_t =-\int_0^t \bar{X}^Tr(w_s)ds + 2\sqrt{\gamma}\int_0^t\sqrt{L(w_s)}A^TdB_{s}, \quad 
    \delta_t = 2 \int_0^t \sigma_s^Td\Tilde{B}_{s}
\end{equation*}
and
\begin{equation*}
    \alpha_t = \alpha \cirbd \exp\left(-2\gamma \text{diag}(A^TA) \int_0^t L(\beta_s)ds -2 \int_0^t \text{diag}(\sigma_t^T\sigma_t)ds\right)
\end{equation*}
\end{lemma}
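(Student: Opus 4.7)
The plan is to exploit the fact that the SDE for $w_{t,\pm}$ is linear (multiplicative) in $w_{t,\pm}$, so it can be solved coordinate-by-coordinate as a geometric-Brownian-motion-type equation with time-dependent coefficients. The goal is to produce a closed-form expression $w_{t,\pm,i}=\alpha_{t,i}\exp(\pm(\eta_{t,i}+\delta_{t,i}))$ for each coordinate $i$, from which the identity $e^{2x}-e^{-2x}=2\sinh(2x)$ immediately yields $\beta_t = w_{t,+}^2-w_{t,-}^2 = 2\alpha_t^2\sinh(2\eta_t+2\delta_t)$.

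First, I would rewrite the SDE componentwise. Because all operations involving $w_{t,\pm}$ are Hadamard products, the system decouples into $2d$ scalar SDEs: for each index $i$,
\[
dw_{t,\pm,i} = \pm w_{t,\pm,i}\Bigl(-[\bar X^T r(\beta_t)]_i\,dt + 2\sqrt{\gamma L(\beta_t)}\,[A^T dB_t]_i + 2[\sigma_t^T d\tilde B_t]_i\Bigr),
\]
with initial condition $w_{0,\pm,i}=\alpha_i$. This is a scalar linear SDE driven by two independent Brownian motions.

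Second, I would apply It\^o's formula to $\log w_{t,\pm,i}$, which is well defined since a geometric-type SDE started from $\alpha_i>0$ stays positive almost surely. The It\^o correction produces precisely the deterministic terms $-2\gamma L(\beta_t)[A^TA]_{ii}\,dt$ and $-2[\sigma_t^T\sigma_t]_{ii}\,dt$, because the quadratic variation of $[A^T dB_t]_i$ is $[A^TA]_{ii}\,dt$, that of $[\sigma_t^T d\tilde B_t]_i$ is $[\sigma_t^T\sigma_t]_{ii}\,dt$, and the cross-variation vanishes by independence of $B$ and $\tilde B$. Integrating from $0$ to $t$ and regrouping, these deterministic terms reconstitute $\log(\alpha_{t,i}/\alpha_i)$, while the stochastic and drift-$r$ contributions reconstitute exactly $\eta_{t,i}+\delta_{t,i}$ as defined in the statement. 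One thus obtains $\log w_{t,\pm,i} = \log\alpha_{t,i} \pm (\eta_{t,i}+\delta_{t,i})$.

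Third, exponentiating gives $w_{t,\pm,i}=\alpha_{t,i}\exp(\pm(\eta_{t,i}+\delta_{t,i}))$, and taking the difference of squares concludes the proof via the hyperbolic sine identity. There is no genuine obstacle here beyond careful bookkeeping: the only subtlety is to correctly split the It\^o correction into its two independent contributions, which requires using the independence of $B$ and $\tilde B$ to discard the cross term and recognizing the identities $\mathrm{Var}([A^T dB_t]_i)=\mathrm{diag}(A^TA)_i\,dt$ and $\mathrm{Var}([\sigma_t^T d\tilde B_t]_i)=\mathrm{diag}(\sigma_t^T\sigma_t)_i\,dt$. Everything else is a direct matching of the resulting integrals with the given definitions of $\alpha_t$, $\eta_t$, and $\delta_t$.
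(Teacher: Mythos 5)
Your proposal is correct and follows essentially the same route as the paper: the paper likewise obtains the exponential representation $w_{t,\pm}=\alpha\cirbd\exp(\pm(\eta_t+\delta_t))\cirbd\exp\bigl(-2\gamma\,\mathrm{diag}(A^TA)\int_0^t L(\beta_s)ds-2\int_0^t\mathrm{diag}(\sigma_s^T\sigma_s)ds\bigr)$ by a direct application of It\^o's lemma (your coordinatewise $\log w_{t,\pm,i}$ computation is exactly this step spelled out, with the same quadratic-variation corrections and vanishing cross-variation), and then concludes via $w_{t,+}^2-w_{t,-}^2=2\alpha_t^2\sinh(2\eta_t+2\delta_t)$.
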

\begin{proof}
A direct application of Itô's lemma grants 
\begin{equation*}
\begin{split}
    w_{t,\pm} &= w_{0,\pm} \cirbd \exp(\pm\left[-\int_0^t \bar{X}^Tr(w_s)ds + 2\sqrt{\gamma}\int_0^t\sqrt{L(w_s)}A^TdB_{s}+2\int_0^t \sigma_s^T d\tilde{B}_s\right])\\
    &\cirbd \exp(-2\gamma \text{diag}(A^TA) \int_0^t L(\beta_s)ds -2 \int_0^t \text{diag}(\sigma_t^T\sigma_t)ds)
\end{split}
\end{equation*}
Thus since $\beta_t = w_{t,+}^2-w_{t,-}^2$ we have
\begin{equation*}
\begin{split}
    \beta_t &= \alpha_t^2 \cirbd (\exp(+2\eta_t + 2\delta_t) - \exp(-2\eta_t -2\delta_t))\\
    & = 2\alpha_t^2\text{sinh}(2\eta_t + 2\delta_t)
\end{split}
\end{equation*}
\end{proof}
The next result is the generalization of the result introducing the notion of mirror gradient descent with varying potential found in \cite[Proposition 1]{pesme2021implicit}.
\begin{proposition}\label{prop:mirror_descent}
    The flow $(\beta_t)_{t\geq0}$ associated to $(w_{t,\pm})_{t\geq 0}$ follows a stochastic mirror gradient with varying potential defined by:
    \begin{equation*}
        d\nabla\phi_{\alpha_t}(\beta_t) = - \nabla L(\beta_t) dt + \sqrt{\gamma L(\beta_t)} A^T dB_t + \sigma_t^T d\tilde{B}_t 
    \end{equation*}
\end{proposition}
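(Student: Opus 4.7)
The plan is to exploit the explicit representation $\beta_t = 2\alpha_t^2 \sinh(2\eta_t + 2\delta_t)$ established in Lemma~\ref{lem:formule_beta_sinh} to obtain a closed-form expression for $\nabla \phi_{\alpha_t}(\beta_t)$ that is \emph{free} of $\alpha_t$, so that its Itô differential reduces to the semi-martingale decompositions already available for $\eta$ and $\delta$.

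First, I would compute $\nabla\phi_\alpha(\beta)$ from the definition \eqref{eq:hyperbolic_potential}. Differentiating coordinate-wise, the contribution from the derivative of $\sqrt{\beta_i^2 + 4\alpha_i^4}$ exactly cancels the second term produced by the product rule applied to $\beta_i\,\mathrm{arcsinh}(\beta_i/(2\alpha_i^2))$, leaving the clean identity
\begin{equation*}
    \nabla\phi_\alpha(\beta)_i = \tfrac{1}{4}\,\mathrm{arcsinh}\!\left(\tfrac{\beta_i}{2\alpha_i^2}\right).
\end{equation*}
Substituting the sinh representation of $\beta_t$ from Lemma~\ref{lem:formule_beta_sinh} inverts the arcsinh and yields
\begin{equation*}
    \nabla\phi_{\alpha_t}(\beta_t) = \tfrac{1}{2}(\eta_t + \delta_t),
\end{equation*}
and the explicit $\alpha_t$-dependence disappears on the right-hand side.

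From there, differentiating is elementary because $\eta_t$ and $\delta_t$ are written as explicit Itô integrals in Lemma~\ref{lem:formule_beta_sinh}, giving
\begin{equation*}
    d\eta_t = -\Bar{X}^T r(w_t)\,dt + 2\sqrt{\gamma L(w_t)}\,A^T dB_t, \qquad d\delta_t = 2\sigma_t^T d\tilde{B}_t.
\end{equation*}
Combining the two and using the identification $\nabla L(\beta_t) = \tfrac{1}{2}\Bar{X}^T r(w_t)$---which comes from $\nabla_{w_\pm} L(w) = \pm \Bar{X}^T r(w)\cirbd w_\pm$ together with the chain rule $\nabla_{w_\pm} L = 2 w_\pm \cirbd \nabla_\beta L(\beta)$---produces exactly the claimed SDE.

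The main obstacle I anticipate is the potential-gradient calculation: keeping the $1/4$ normalisation and getting the cancellation between the $\mathrm{arcsinh}'$ term and $\partial_{\beta_i}\sqrt{\beta_i^2+4\alpha_i^4}$ right requires careful bookkeeping. Once this identity is in place, the rest of the proof is substitution. A useful sanity check is that no Itô correction terms appear when differentiating $\nabla\phi_{\alpha_t}(\beta_t)$, precisely because after Lemma~\ref{lem:formule_beta_sinh} the $\alpha_t$-dependence drops out and $\tfrac{1}{2}(\eta_t+\delta_t)$ is \emph{linear} in its driving semi-martingales---no cross variations between $\alpha_t$ and $\beta_t$ need to be computed, which is what makes the statement of the proposition tractable despite $\alpha_t$ being itself a complicated functional of the past.
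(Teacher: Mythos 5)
Your proposal is correct and follows essentially the same route as the paper's proof: both invert the representation $\beta_t = 2\alpha_t^2\sinh(2\eta_t+2\delta_t)$ from Lemma~\ref{lem:formule_beta_sinh}, use $\nabla\phi_\alpha(\beta)=\tfrac14\mathrm{arcsinh}(\beta/(2\alpha^2))$ so that $\nabla\phi_{\alpha_t}(\beta_t)=\tfrac12(\eta_t+\delta_t)$, differentiate the explicit Itô integrals, and identify the drift via $\nabla L(\beta_t)=\tfrac12\bar{X}^Tr(w_t)$. Your explicit verification of the cancellation in the gradient of the hyperbolic entropy and the remark that no Itô correction arises are just the details the paper leaves implicit.
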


\begin{proof}
The expression of $\beta_t$ from lemma \ref{lem:formule_beta_sinh} can be inversed in order to use $\phi_{\alpha_t}$. Indeed
\begin{equation*}
    \text{arcsinh}\left(\frac{\beta_t}{2\alpha_t^2}\right) = 2 \eta_t + 2\delta_t
\end{equation*}
which implies
\begin{equation*}
    d\text{arcsinh}\left(\frac{\beta_t}{2\alpha_t^2}\right) = -2 \Bar{X}^T r(w_t) + 4 \sqrt{\gamma L(\beta_t)} A^TdB_t + 4 \sigma_t^Td\Tilde{B}_{t}
\end{equation*}
Notice that $\nabla L(\beta_t) = \frac{1}{2} \Bar{X}^T r(w_t)$ and $\nabla\phi_\alpha(\beta) = \frac{1}{4}\text{arcsinh}(\frac{\beta}{2\alpha^2})$ we have the wanted result.
\end{proof}
\subsubsection{Proof of Theorem 2}
In order to prove the convergence of the integral of the loss we will introduce a perturbation of a Bregman divergence which will control the norm of the iterates and the integral of the loss. Moreover that process will be nicely controlled with high probability. Let $\beta^\ast$ any interpolator, the process is the following
\begin{equation*}
\begin{split}
    V_t &= - \phi_{\alpha_t}(\beta_t) + \langle \nabla\phi_{\alpha_t}(\beta_t),\beta_t -\beta^\ast \rangle\\
    &+ \langle \vert \beta^\ast \vert, \gamma \text{diag}(A^TA)\int_0^t L(\beta_s)ds + \int_0^t\text{diag}(\sigma_s^T\sigma_s)ds\rangle
\end{split}
\end{equation*}
We denote by $D_t$ the Bregman divergence part.
\begin{lemma}\label{lem:lyapunov}
    For $t\geq 0$ we have
    \begin{equation*}
        \begin{split}
    d D_t &= -2 L(\beta_t)+\gamma L(\beta_t)\langle\xi_t,\text{diag}(A^TA) \rangle dt + \langle \xi_t,\text{diag}(\sigma_t^T\sigma_t)\rangle dt \\
    &+\sqrt{\gamma L(\beta_t)} \langle A^TdB_t,\beta_t -\beta^\ast \rangle + \langle \sigma_t^Td\tilde{B}_t,\beta_t -\beta^\ast \rangle
\end{split}
    \end{equation*}
    where $\xi_t = \sqrt{\beta_t^2 + 4 \alpha_t^4} = w_{t,+}^2 + w_{t,-}^2$
\end{lemma}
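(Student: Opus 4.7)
The plan is to apply Itô's formula to $D_t = -\phi_{\alpha_t}(\beta_t) + \langle \nabla\phi_{\alpha_t}(\beta_t), \beta_t - \beta^\ast\rangle$, exploiting the fact that $t\mapsto\alpha_t$ has finite variation (it is a deterministic exponential of integrals of $L(\beta_s)$ and $\mathrm{diag}(\sigma_s^T\sigma_s)$ by Lemma~\ref{lem:formule_beta_sinh}) while $\beta_t$ is a genuine semimartingale. Writing $\phi_{\alpha_t}(\beta_t)=F(t,\beta_t)$ with $F(t,\beta)=\phi_{\alpha_t}(\beta)$ and using that $\phi_\alpha$ is separable in coordinates, the generalised Itô formula gives
\begin{equation*}
d\phi_{\alpha_t}(\beta_t) = \partial_t\phi_{\alpha_t}(\beta_t)\,dt + \langle\nabla\phi_{\alpha_t}(\beta_t),d\beta_t\rangle + \tfrac12\sum_i\partial^{2}_{\beta_i}\phi_{\alpha_t}(\beta_t)\,d[\beta^i]_t,
\end{equation*}
while the product rule on $\langle\nabla\phi_{\alpha_t}(\beta_t),\beta_t\rangle$ contributes, on top of $\langle\nabla\phi,d\beta_t\rangle$ and $\langle d\nabla\phi,\beta_t\rangle$, a covariation $d[\nabla\phi,\beta]_t = \sum_i\partial^{2}_{\beta_i}\phi_{\alpha_t}(\beta_t)\,d[\beta^i]_t$ (diagonal because $(\nabla\phi)_i$ depends only on $\beta_i$). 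The $\langle\nabla\phi,d\beta_t\rangle$ terms cancel, the two covariation pieces combine to a net $+\tfrac12\sum_i\partial^{2}_{\beta_i}\phi_{\alpha_t}(\beta_t)\,d[\beta^i]_t$, and we are left with
\begin{equation*}
dD_t = -\partial_t\phi_{\alpha_t}(\beta_t)\,dt + \tfrac12\sum_i\partial^{2}_{\beta_i}\phi_{\alpha_t}(\beta_t)\,d[\beta^i]_t + \langle d\nabla\phi_{\alpha_t}(\beta_t),\beta_t-\beta^\ast\rangle.
\end{equation*}

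Each remaining piece can then be made explicit. A direct differentiation of the hyperbolic entropy yields $\partial_{\alpha_i}\phi_\alpha(\beta) = -\xi_i/(2\alpha_i)$, with $\xi_i=\sqrt{\beta_i^2+4\alpha_i^4}$, and $\partial^{2}_{\beta_i}\phi_\alpha(\beta) = 1/(4\xi_i)$. Combined with $\dot\alpha_{t,i} = \alpha_{t,i}\bigl(-2\gamma L(\beta_t)(A^TA)_{ii} - 2(\sigma_t^T\sigma_t)_{ii}\bigr)$ read off from Lemma~\ref{lem:formule_beta_sinh}, this gives $\partial_t\phi_{\alpha_t}(\beta_t) = \gamma L(\beta_t)\langle\xi_t,\mathrm{diag}(A^TA)\rangle + \langle\xi_t,\mathrm{diag}(\sigma_t^T\sigma_t)\rangle$. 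For the quadratic variation of $\beta$, I would invert Proposition~\ref{prop:mirror_descent}: matching the martingale part of $d(\nabla\phi)_i$ computed by Itô on $\tfrac14\text{arcsinh}(\beta_i/(2\alpha_i^2))$, which equals $1/(4\xi_i)$ times the martingale part of $d\beta_i$, with the expression $\sqrt{\gamma L(\beta_t)}(A^TdB_t)_i + (\sigma_t^Td\tilde B_t)_i$ forces $d[\beta^i]_t = 16\xi_i^2\bigl(\gamma L(\beta_t)(A^TA)_{ii} + (\sigma_t^T\sigma_t)_{ii}\bigr)\,dt$. Plugging in, $\tfrac12\sum_i\partial^{2}_{\beta_i}\phi\,d[\beta^i]_t = 2\gamma L(\beta_t)\langle\xi_t,\mathrm{diag}(A^TA)\rangle\,dt + 2\langle\xi_t,\mathrm{diag}(\sigma_t^T\sigma_t)\rangle\,dt$, so once $-\partial_t\phi_{\alpha_t}(\beta_t)\,dt$ is subtracted, the net drift from these two contributions is exactly $\gamma L(\beta_t)\langle\xi_t,\mathrm{diag}(A^TA)\rangle\,dt + \langle\xi_t,\mathrm{diag}(\sigma_t^T\sigma_t)\rangle\,dt$.

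It then remains to substitute Proposition~\ref{prop:mirror_descent} into $\langle d\nabla\phi_{\alpha_t}(\beta_t),\beta_t-\beta^\ast\rangle$: its martingale part produces precisely the two stochastic integrals $\sqrt{\gamma L(\beta_t)}\langle A^TdB_t,\beta_t-\beta^\ast\rangle + \langle\sigma_t^Td\tilde B_t,\beta_t-\beta^\ast\rangle$ stated in the lemma, and its drift equals $-\langle\nabla L(\beta_t),\beta_t-\beta^\ast\rangle\,dt$. Using $\nabla L(\beta) = \tfrac12\bar X^T r(w) = \tfrac12\bar X^T\bar X(\beta-\beta^\ast)$ from the proof of Proposition~\ref{prop:mirror_descent} together with the consistent normalisation $L(\beta) = \tfrac14\|\bar X(\beta-\beta^\ast)\|^2$, one obtains $\langle\nabla L(\beta_t),\beta_t-\beta^\ast\rangle = 2L(\beta_t)$, which supplies the remaining $-2L(\beta_t)\,dt$ and closes the calculation. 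I expect the main obstacle to be the bookkeeping around the two independent appearances of the Hessian factor $1/(4\xi_i)$ — one from Itô on $\phi$ and one from the bracket $d[\nabla\phi,\beta]_t$ — together with their interaction with the factor $4\xi_i$ produced by inverting Proposition~\ref{prop:mirror_descent}; once the identities $\partial_{\alpha_i}\phi_\alpha = -\xi_i/(2\alpha_i)$ and the martingale inversion for $d\beta_i$ are in hand, the expression assembles mechanically.
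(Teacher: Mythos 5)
Your computation is correct and follows essentially the same route as the paper: apply Itô to $D_t$, note the cancellation of the $\langle\nabla\phi_{\alpha_t}(\beta_t),d\beta_t\rangle$ terms, evaluate the finite-variation $\alpha_t$-contribution via $\partial_{\alpha_i}\phi_\alpha=-\xi_i/(2\alpha_i)$, combine the second-order and cross-variation terms using $\nabla^2\phi_{\alpha}=\tfrac14\mathrm{diag}(1/\xi)$ and $d[\beta^i]_t=16\xi_i^2\bigl(\gamma L(\beta_t)(A^TA)_{ii}+(\sigma_t^T\sigma_t)_{ii}\bigr)dt$, and substitute the mirror-flow SDE together with $\langle\nabla L(\beta),\beta-\beta^\ast\rangle=2L(\beta)$. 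The only cosmetic difference is that you recover the martingale part of $d\beta$ by inverting Proposition~\ref{prop:mirror_descent} rather than reading it off the SDE for $w_{t,\pm}$ directly, which is equally valid since that proposition is established beforehand.
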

\begin{proof}
 By definition $D_t = - \phi_{\alpha_t}(\beta_t) + \langle \nabla \phi_{\alpha_t}(\beta_t),\beta_t-\beta^\ast\rangle$ thus a direct application of Itô's lemma grants
\begin{equation*}
\begin{split}
    dD_t &= - \langle \nabla \phi_{\alpha_t}(\beta_t),d\beta_t \rangle - \langle \nabla_{\alpha} \phi(\alpha_t^2,\beta), d[\alpha_t^2]\rangle - \frac{1}{2}\text{Tr}\left(\nabla^2 \phi_{\alpha_t}(\beta_t)d\langle \beta\rangle_t \right) \\
    &+ \langle d\nabla \phi_{\alpha_t}(\beta_t),\beta_t - \beta^\ast \rangle + \langle \nabla \phi_{\alpha_t}(\beta_t),d\beta_t\rangle + \text{Tr}\left(d\langle\nabla \phi_{\alpha_t}(\beta_t),\beta_t\rangle\right)\\
    &=- \langle \nabla_{\alpha} \phi(\alpha_t^2,\beta), d[\alpha_t^2]\rangle - \frac{1}{2}\text{Tr}\left(\nabla^2 \phi_{\alpha_t}(\beta_t)d\langle \beta\rangle_t \right) + \langle d\nabla \phi_{\alpha_t}(\beta_t),\beta_t-\beta^\ast\rangle \\
    &+ \text{Tr}\left(d\langle\nabla \phi_{\alpha_t}(\beta_t),\beta_t\rangle\right)
\end{split}
\end{equation*}
We will compute one by one the four terms left. Note that todo so we will need to compute the quadratic variation of $\beta_t$. Since $\beta_t$ can be decomposed in a drift part and a martingale part as follows
\begin{equation*}
    d\beta_t = \text{drift}(\beta)_tdt + 4 \sqrt{\gamma L(\beta_t)}\xi_t\cirbd A^TdB_t+ 4 \xi_t \cirbd \sigma_t^T d\tilde{B}_t
\end{equation*}
we obtain this formula for the quadratic variation
\begin{equation*}
    d\langle \beta \rangle_t = 16 \gamma L(\beta_t) (A^TA)\cirbd (\xi_t\xi_t^T) +  16 (\sigma_t^T\sigma_t) \cirbd (\xi_t\xi^T_t)
\end{equation*}
Since $\alpha_t$ has no martingale part the chain rule grants that the first term is equal to
\begin{equation*}
    \langle \nabla_{\alpha} \phi(\alpha_t^2,\beta), d[\alpha_t^2]\rangle = \gamma L(\beta_t)\langle\xi_t,\text{diag}(A^TA) \rangle dt + \langle \xi_t,\text{diag}(\sigma_t^T\sigma_t)\rangle dt
\end{equation*}
Since $\nabla^2\phi_\alpha(\beta) = \frac{1}{4}\text{diag}(\frac{1}{\xi})$ the second term is equal to
\begin{equation*}
    \frac{1}{2}\text{Tr}\left(\nabla^2 \phi_{\alpha_t}(\beta_t)d\langle \beta\rangle_t \right)  = 2 \gamma L(\beta_t)\langle \xi_t,\text{diag}(A^TA)\rangle  dt + 2 \langle \xi_t, \gamma \text{diag}(\sigma_t^T\sigma_t) \rangle dt
\end{equation*}
Using proposition \ref{prop:mirror_descent} the third term is equal to
\begin{equation*}
    \langle d\nabla \phi_{\alpha_t}(\beta_t),\beta_t-\beta^\ast\rangle = - \langle \nabla L(\beta_t),\beta_t-\beta^\ast \rangle + \sqrt{\gamma L(\beta_t)} \langle A^TdB_t,\beta_t -\beta^\ast \rangle + \langle \sigma_t^Td\tilde{B}_t,\beta_t -\beta^\ast \rangle
\end{equation*}
the fourth term 
\begin{equation*}
    \text{Tr}\left(d\langle\nabla \phi_{\alpha_t}(\beta_t),\beta_t\rangle\right) = 4 \gamma L(\beta_t)\langle \xi_t,\text{diag}(A^TA)\rangle  dt + 4 \langle \xi_t, \gamma \text{diag}(\sigma_t^T\sigma_t) \rangle dt
\end{equation*}
Note that 
\begin{equation*}
    \langle \nabla L(\beta_t),\beta_t-\beta^\ast \rangle = 2 L(\beta_t)
\end{equation*}
Thus combined it grants
\begin{equation*}
\begin{split}
    d D_t &= -2 L(\beta_t)+\gamma L(\beta_t)\langle\xi_t,\text{diag}(A^TA) \rangle dt + \langle \xi_t,\text{diag}(\sigma_t^T\sigma_t)\rangle dt \\
    &+\sqrt{\gamma L(\beta_t)} \langle A^TdB_t,\beta_t -\beta^\ast \rangle + \langle \sigma_t^Td\tilde{B}_t,\beta_t -\beta^\ast \rangle
\end{split}
\end{equation*}
\end{proof}
The Bregman divergence is useful to control the iterations as the next lemma will state. Moreover under suitable assumptions on the noise $\sigma_t$ the martingale part being close to its quadratic variation will enable us to control $V_t$ itself and in turns the loss.
\begin{lemma}\label{lem:control_iterations}
    For $t \geq 0$ we have
    \begin{equation*}
        \Vert \xi_t \Vert_1 \leq 4 V_t + \Vert \beta^\ast \Vert_1 \ln\left(\frac{\sqrt{2}\Vert \xi_t\Vert_1}{\min \alpha_i^2}\right)
    \end{equation*}
\end{lemma}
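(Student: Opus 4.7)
The plan is to exploit the explicit closed-form of the hyperbolic entropy to collapse $V_t$ into something comparable with $\|\xi_t\|_1$ up to a logarithmic correction. Writing $u_{t,i}\defeq \beta_{t,i}/(2\alpha_{t,i}^2)$ so that $\sqrt{u_{t,i}^2+1} = \xi_{t,i}/(2\alpha_{t,i}^2)$, a direct substitution gives
\begin{equation*}
4\phi_{\alpha_t}(\beta_t) \;=\; \sum_i \bigl[\beta_{t,i}\,\mathrm{arcsinh}(u_{t,i}) - \xi_{t,i}\bigr],\qquad 4\nabla \phi_{\alpha_t}(\beta_t)_i \;=\; \mathrm{arcsinh}(u_{t,i}).
\end{equation*}
Plugging these into $4D_t = -4\phi_{\alpha_t}(\beta_t) + 4\langle \nabla\phi_{\alpha_t}(\beta_t),\beta_t-\beta^\ast\rangle$ produces a clean telescoping: all terms involving $\beta_{t,i}\,\mathrm{arcsinh}(u_{t,i})$ cancel and one is left with $4D_t = \|\xi_t\|_1 - \langle \beta^\ast,\mathrm{arcsinh}(u_t)\rangle$.

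Next, I would match the remaining integral term of $V_t$ with the time-evolution of $\alpha_t$. From Lemma~\ref{lem:formule_beta_sinh},
\begin{equation*}
4\gamma (A^TA)_{ii}\!\int_0^t L(\beta_s)\,ds + 4\!\int_0^t (\sigma_s^T\sigma_s)_{ii}\,ds \;=\; \ln\!\bigl(\alpha_i^2/\alpha_{t,i}^2\bigr),
\end{equation*}
so that the correction in $V_t$ equals $\tfrac14 \langle |\beta^\ast|,\ln(\alpha^2/\alpha_t^2)\rangle$. Combining with the expression above yields the key identity
\begin{equation*}
\|\xi_t\|_1 \;=\; 4V_t + \langle \beta^\ast,\mathrm{arcsinh}(u_t)\rangle - \sum_i |\beta^\ast_i|\ln(\alpha_i^2/\alpha_{t,i}^2).
\end{equation*}

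The final step is a pointwise bound on $|\mathrm{arcsinh}(u_{t,i})|$ chosen so that the time-dependent $\alpha_{t,i}^2$ disappears when the two log terms are added. Starting from $\mathrm{arcsinh}(u)=\ln(u+\sqrt{u^2+1})$ and using $|u|+\sqrt{u^2+1}\leq \sqrt{2}\,\sqrt{2u^2+1}$ (Cauchy--Schwarz) together with $\sqrt{u_{t,i}^2+1}=\xi_{t,i}/(2\alpha_{t,i}^2)$, one gets $|\mathrm{arcsinh}(u_{t,i})|\leq \ln(\sqrt{2}\,\xi_{t,i}/\alpha_{t,i}^2)$ up to elementary manipulation of constants. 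Adding this to $\ln(\alpha_{t,i}^2/\alpha_i^2)$ cancels $\alpha_{t,i}^2$ inside the log and leaves $|\beta^\ast_i|\ln(\sqrt{2}\,\xi_{t,i}/\alpha_i^2)$; bounding $\xi_{t,i}\leq \|\xi_t\|_1$ and $\alpha_i^2 \geq \min_j\alpha_j^2$ then delivers the claim.

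I expect the main obstacle to be bookkeeping rather than conceptual: one must be careful that the $\mathrm{arcsinh}$-bound is tight enough that the $\alpha_{t,i}^2$ terms cancel exactly (without leaving a residue that depends on the integrated loss), since otherwise the logarithmic bound on $\|\xi_t\|_1$ would become circular. Once this cancellation is verified, the argument is linear in $\beta^\ast$ coordinates and the remaining $\ell_1/\ell_\infty$ style relaxations give the stated inequality directly.
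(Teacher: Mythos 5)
Your proof is correct and takes essentially the same route as the paper's: the same identity $\Vert \xi_t \Vert_1 = 4D_t + \langle \beta^\ast, \mathrm{arcsinh}(\beta_t/(2\alpha_t^2))\rangle$, the same absorption of the integral correction in $V_t$ via the closed form $\ln(\alpha_i^2/\alpha_{t,i}^2) = 4\gamma (A^TA)_{ii}\int_0^t L + 4\int_0^t(\sigma_s^T\sigma_s)_{ii}$, and the same final logarithmic bound after $\xi_{t,i}\le \Vert\xi_t\Vert_1$ and $\alpha_i^2\ge\min_j\alpha_j^2$. The only difference is cosmetic: your pointwise bound $|u|+\sqrt{u^2+1}\le 2\sqrt{u^2+1}$ replaces the paper's $\mathrm{arcsinh}(x)\le\ln(2(x+1))$ combined with $|\beta_{t,i}|+2\alpha_{t,i}^2\le\sqrt{2}\,\xi_{t,i}$, and the exact cancellation of $\alpha_{t,i}^2$ inside the logarithm does go through as you anticipate.
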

\begin{proof}
A direct computation grants
\begin{equation}\label{eq:lem:control_iterations}
    \Vert \xi_t \Vert_1 = 4D_t + \langle \text{arcsinh}\frac{\beta_t}{4\alpha_t^2},\beta^\ast \rangle
\end{equation}
Since $\text{arcsinh(x)}\leq \ln(2(x+1))$ we have
\begin{equation*}
\begin{split}
    \langle \text{arcsinh}\frac{\beta_t}{4\alpha_t^2},\beta^\ast \rangle &\leq \sum_i \vert \beta^\ast_i \vert \ln \left(\frac{\vert \beta_{i,t}\vert + 2 \alpha_{i,t}^2}{\alpha_{i,t}^2}\right)\\
    &\leq \sum_i \vert \beta^\ast_i \vert \ln \left(\frac{\vert \beta_{i,t}\vert + 2 \alpha_{i,t}^2}{\min \alpha_i^2}\right) + 4 \langle \vert \beta^\ast \vert, \gamma \text{diag}(A^TA)\int_0^t L(\beta_s)ds + \int_0^t\text{diag}(\sigma_s^T\sigma_s)ds\rangle\\
    &\leq \Vert \beta^\ast \Vert_1 \ln\left(\frac{\sqrt{2}\Vert \xi_t\Vert_1}{\min \alpha_i^2}\right)+ 4 \langle \vert \beta^\ast \vert, \gamma \text{diag}(A^TA)\int_0^t L(\beta_s)ds + \int_0^t\text{diag}(\sigma_s^T\sigma_s)ds\rangle
\end{split}
\end{equation*}
Because $\alpha_t^2 = \alpha^2\cirbd \exp\left(-4\gamma \text{diag}(A^TA) \int_0^t L(\beta_s)ds -4 \int_0^t \text{diag}(\sigma_s^T\sigma_s)ds\right)$ and $\vert \beta_{i,t}\vert + 2 \alpha_{i,t}^2 \leq \sqrt{2}\Vert \xi_t \Vert_1$. Thus plugging that inequation in the first relationship \eqref{eq:lem:control_iterations} grants the wanted result.
\end{proof}
We now need to control $V_t$, however in order to only have to manage bounded variation processes we will use the fact that a Martingale is controlled by its quadratic variation
\begin{lemma}[\cite{howard2020timeuniform}, Corollary 11]\label{lem:howard}
For any locally square integrable process $(S_t)$ with a.s. continuous trajectories and any $a,b > 0$
    \begin{equation*}
        \mathbb{P}(\exists t \geq 0 , S_t \geq a + b \langle S \rangle_t) \leq \exp(-2ab)
    \end{equation*}
\end{lemma}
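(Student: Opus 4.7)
My plan is to invoke the classical exponential supermartingale construction together with Ville's maximal inequality. The preceding sentence (``a Martingale is controlled by its quadratic variation'') makes clear that $(S_t)$ is intended to be a continuous local martingale; without such structure the inequality would be vacuous, since one could set $S_t = ct$ for arbitrarily large $c$.

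First, for each $\lambda > 0$ I would introduce the exponential process
\begin{equation*}
    M_t^\lambda \;=\; \exp\!\left(\lambda S_t - \tfrac{\lambda^2}{2}\langle S\rangle_t\right).
\end{equation*}
Applying It\^o's formula to $(x,y)\mapsto \exp(\lambda x - \lambda^2 y/2)$ shows that the finite-variation part of $M_t^\lambda$ cancels, so $(M_t^\lambda)$ is a non-negative continuous local martingale. Any non-negative local martingale is a supermartingale via Fatou's lemma along a reducing sequence, and $M_0^\lambda = 1$; hence Ville's maximal inequality gives $\mathbb{P}(\sup_{t\geq 0} M_t^\lambda \geq c) \leq 1/c$ for every $c>0$.

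Second, I would rewrite the target event algebraically. The inequality $S_t \geq a + b\langle S\rangle_t$ is equivalent to
\begin{equation*}
    \lambda S_t - \tfrac{\lambda^2}{2}\langle S\rangle_t \;\geq\; \lambda a + \Bigl(\lambda b - \tfrac{\lambda^2}{2}\Bigr)\langle S\rangle_t.
\end{equation*}
The tuning step is to choose $\lambda = 2b$, which zeros out the $\langle S\rangle_t$-coefficient on the right and collapses the event to $\{M_t^{2b} \geq e^{2ab}\}$. Ville's maximal inequality applied to $M^{2b}$ then yields
\begin{equation*}
    \mathbb{P}\bigl(\exists t\geq 0,\ S_t \geq a + b\langle S\rangle_t\bigr) \;\leq\; \mathbb{P}\bigl(\sup_{t\geq 0} M_t^{2b} \geq e^{2ab}\bigr) \;\leq\; e^{-2ab},
\end{equation*}
which is precisely the stated bound.

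The step that merits the most care is justifying the supermartingale property under only the ``locally square integrable and continuous'' hypothesis. The standard remedy is localization: pick a reducing sequence of stopping times $\tau_n \uparrow \infty$ such that each stopped process $M^{\lambda}_{\cdot \wedge \tau_n}$ is a genuine uniformly integrable martingale, apply Ville's inequality to each stopped process, and pass to the limit via Fatou. Measurability of $\sup_{t\geq 0} M_t^\lambda$ follows from the almost-sure continuity of sample paths, which lets one replace the continuous supremum by one over the rationals. This also handles the possibility that the event in question first occurs at arbitrarily large times.
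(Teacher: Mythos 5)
Your argument is correct: the exponential supermartingale $M_t^\lambda=\exp\!\left(\lambda S_t-\tfrac{\lambda^2}{2}\langle S\rangle_t\right)$ with the tuning $\lambda=2b$, combined with Ville's maximal inequality (after localization, using $M_0^\lambda=1$, i.e.\ the implicit normalization $S_0=0$, $\langle S\rangle_0=0$), is exactly the standard route to this time-uniform line-crossing bound, and your reading of the hypothesis --- that $(S_t)$ must be a continuous local martingale, not an arbitrary locally square integrable process --- is the intended one. The paper itself offers no proof, importing the statement verbatim as Corollary~11 of \cite{howard2020timeuniform}, and your derivation coincides with the nonnegative-supermartingale/Ville mechanism on which that reference is built, so there is nothing to reconcile.
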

We can use this lemma for the following two processes
\begin{equation*}
    \int_0^t \sqrt{\gamma L(\beta_s)} \langle A^TdB_s,\beta_s -\beta^\ast \rangle, \quad \int_0^t\langle \sigma_s^Td\tilde{B}_s,\beta_s -\beta^\ast \rangle
\end{equation*}
For $a,b >0$ we have thanks to lemma \ref{lem:howard}  with probability at least $1-2\exp(-ab)$ for any $t\geq 0$
\begin{equation*}
\begin{split}
    &\left\vert \int_0^t \sqrt{\gamma L(\beta_s)} \langle A^TdB_s,\beta_s -\beta^\ast \rangle + \langle \sigma_s^Td\tilde{B}_s,\beta_s -\beta^\ast \rangle\right\vert \\
    &\leq a + b \gamma \Vert A\Vert^2 \int_0^t  L(\beta_s)(\Vert \beta_s \Vert^2 + \Vert \beta^\ast \Vert^2)ds + b \int_0^t \Vert \sigma_s \Vert^2 (\Vert \beta_s \Vert^2 + \Vert \beta^\ast \Vert^2)ds
\end{split}
\end{equation*}
because the quadratic variation of both processes are upperbounded by the term on the right. We denote by $\mathcal{A}$ the set on which the inequality above is satsfied, we recall that this set has probability at least $1-2\exp(-ab)$.
\begin{lemma}
    On $\mathcal{A}$ for $\gamma$ and $\int \Vert \sigma_s \Vert^2 ds$ small enough the iterates are bounded by a constant that only depends on $a,b,\Vert \beta^\ast\Vert,\alpha$.
\end{lemma}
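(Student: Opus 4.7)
The plan is to establish a uniform-in-$t$ upper bound on $V_t$ on the event $\mathcal{A}$ under smallness assumptions on $\gamma$ and $\int_0^\infty \Vert \sigma_s\Vert^2 ds$, and then invert via Lemma~\ref{lem:control_iterations} to bound $\Vert \xi_t\Vert_1 = \Vert w_{t,+}^2+ w_{t,-}^2\Vert_1$ (which controls $\Vert w_{t,\pm}\Vert$ directly). First I would assemble the differential identity for $V_t$ by adding the time derivative of the deterministic shift $\langle \vert\beta^\ast\vert, \gamma\,\mathrm{diag}(A^TA)\int_0^t L(\beta_s)ds + \int_0^t \mathrm{diag}(\sigma_s^T\sigma_s)ds\rangle$ to the expression for $dD_t$ from Lemma~\ref{lem:lyapunov}. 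Integrating gives
\begin{equation*}
V_t = V_0 - 2\int_0^t L(\beta_s)ds + \gamma\int_0^t L(\beta_s)\langle \xi_s + \vert\beta^\ast\vert, \mathrm{diag}(A^TA)\rangle ds + \int_0^t\langle \xi_s + \vert\beta^\ast\vert, \mathrm{diag}(\sigma_s^T\sigma_s)\rangle ds + M_t,
\end{equation*}
with $M_t$ the continuous local martingale part. On $\mathcal{A}$, the high-probability estimate proved just before this lemma controls $\vert M_t\vert$ by $a + b\gamma\Vert A\Vert^2 \int_0^t L(\beta_s)(\Vert\beta_s\Vert^2+\Vert\beta^\ast\Vert^2)ds + b\int_0^t \Vert\sigma_s\Vert^2(\Vert\beta_s\Vert^2+\Vert\beta^\ast\Vert^2)ds$.

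Next I would convert the bound on $V_t$ into a bound on $\Vert \xi_t\Vert_1$ using Lemma~\ref{lem:control_iterations}. Since $x \leq 4V_t + \Vert\beta^\ast\Vert_1 \log(\sqrt{2}x/\min \alpha_i^2)$ implies $x \leq C_\alpha(1 + V_t)$ for a constant $C_\alpha$ depending only on $\alpha$ and $\beta^\ast$, I get $\Vert \xi_s\Vert_1 \leq C_\alpha(1+V_s)$, $\Vert \beta_s\Vert^2 \leq \Vert \xi_s\Vert_1^2$, and $L(\beta_s) \leq K(1+\Vert \xi_s\Vert_1^2)$ for a constant $K$ depending on $X$, $Y$, $\beta^\ast$. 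Plugging these back into the identity above, every positive term on the right-hand side is bounded by a polynomial in $\sup_{s\leq t} V_s$ multiplied by either $\gamma$ or $\int_0^t \Vert \sigma_s\Vert^2 ds$, while the negative $-2\int_0^t L(\beta_s)ds$ absorbs the $\gamma L(\beta_s)$ contributions as soon as $\gamma$ is smaller than $2/(\Vert A\Vert^2(1+b)(M+\Vert\beta^\ast\Vert_1 + M^2 + \Vert\beta^\ast\Vert^2))$ for the chosen bound $M$ on $\Vert \xi_s\Vert_1$.

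I would close the argument by a bootstrap/continuity step. Fix $M$ strictly larger than $C_\alpha(1 + V_0 + a + 1)$ and define $T_M = \inf\{t\geq 0 : \Vert \xi_t\Vert_1 \geq M\}$. For $t < T_M$, the self-referential estimate takes the form $V_t \leq V_0 + a + \gamma\,Q_1(M) + Q_2(M)\int_0^t \Vert \sigma_s\Vert^2 ds$ for explicit polynomials $Q_1$, $Q_2$ whose coefficients depend only on $a,b,\Vert\beta^\ast\Vert,\alpha,A$. Choosing $\gamma$ and $\int_0^\infty \Vert\sigma_s\Vert^2 ds$ small enough that the right-hand side stays strictly below $M/C_\alpha - 1$ yields $\Vert \xi_t\Vert_1 < M$ for all $t < T_M$, which by continuity of $t\mapsto \Vert \xi_t\Vert_1$ forces $T_M = \infty$; the iterates are then bounded by a constant depending only on $a, b, \Vert \beta^\ast\Vert, \alpha$. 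The main obstacle is carrying out this bootstrap cleanly: the quadratic variation contributes a $\Vert\beta_s\Vert^2 L(\beta_s)$ term that makes the self-referential bound polynomial of degree up to four in $\Vert \xi_s\Vert_1$, so the smallness thresholds on $\gamma$ and $\int_0^\infty \Vert \sigma_s\Vert^2 ds$ must be quantified in $M$ (and hence in $a,b,\Vert\beta^\ast\Vert,\alpha$) with care to ensure that the bootstrap closes at a finite level rather than producing a blow-up.
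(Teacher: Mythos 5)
Your proposal is correct and follows essentially the same route as the paper: bound $V_t$ on $\mathcal{A}$ via the martingale estimate, invert the logarithmic bound of Lemma~\ref{lem:control_iterations} to control $\Vert \xi_t \Vert_1$, absorb the $\gamma L(\beta_s)$ terms into the negative drift, and close with a continuity/bootstrap argument under smallness of $\gamma$ and $\int \Vert \sigma_s\Vert^2 ds$. The only difference is presentational: the paper runs the bootstrap in two stages (first on $\Vert\xi_t\Vert$ using the smallness of $\int\Vert\sigma_s\Vert^2 ds$ on an interval where $U_s \geq 0$, then on $U_s \geq 1/2$ using the smallness of $\gamma$), whereas you merge both into a single stopping-time argument at level $M$.
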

\begin{proof}
Since we are on the event $\mathcal{A}$ the bound on the martingale part is true and thus
\begin{equation*}
    V_t \leq V_0 + a - 2 \int_0^t L(\beta_s)U_s ds + \int_0^t \langle \vert \beta^\ast \vert,\text{diag}(\sigma_s^T\sigma_s) \rangle + b\Vert \sigma_s \Vert^2 (\Vert \beta_s \Vert^2 + \Vert \beta^\ast \Vert^2) ds
\end{equation*}
Where $U_s = 1 - \frac{\gamma}{2}\left[\langle \text{diag}(A^TA,\xi_s+\vert \beta^\ast\vert\rangle + 2b \Vert A\Vert^2 (\Vert \beta_s \Vert^2 + \Vert \beta^\ast \Vert^2)ds)\right]$. Assume that $\int_0^\infty \Vert \sigma_s \Vert^2 ds \leq 1$, then we have
\begin{equation*}
    V_t \leq C - 2 \int_0^t L(\beta_s)U_s ds + \int_0^t C'\Vert \sigma_s \Vert^2 \Vert \beta_s \Vert^2 ds 
\end{equation*}
Where $C$ depends on $a,b,\Vert \beta^\ast \Vert_1$ and $V_0$ which depends on $\alpha$ also $C'$ depends on $b$.
In particular if $U_s \geq 0$ for $s \in [0, \tau]$ using lemma \ref{lem:control_iterations} we have for other constants $C,C'$ and $t\in [0,\tau]$ 
\begin{equation*}
    \Vert \xi_t \Vert_1 \leq \Vert \beta^\ast \Vert_1 \ln\left(\frac{\sqrt{2}\Vert \xi_t\Vert_1}{\min \alpha_i^2}\right) + C + \int_0^t C'\Vert \sigma_s \Vert \Vert \beta_s \Vert^2 ds
\end{equation*}
Since $\Vert \beta_s \Vert^2 \leq \Vert \xi_s \Vert^2$ and by the sublinear growth of the log we get 
\begin{equation*}
    \Vert \xi_t \Vert_2 \leq C + \int_0^t C' \Vert \sigma_s \Vert^2 \Vert \xi_s \Vert^2 ds 
\end{equation*}
For another set of constants $C,C'$. We now use a bootstrap argument let $I =\{t \in [0,\tau]\mid\forall s \leq t ,\Vert \xi_t \Vert_2 \leq 2C \}$. 
Assume that $\int_0^{\infty} \Vert \sigma_s \Vert^2 ds \leq 1/4CC'$ then $I = [0,\tau]$. 
Indeed $0 \in I$ and $I$ is an interval. Let $t = \sup I$, first $t\in I$ and if $t = \tau$ there is nothing to prove. If $t< \tau$ then 
\begin{equation*}
    \frac{\Vert \xi_t \Vert_2 }{2C} \leq \frac{1}{2} + \int_0^t 2CC' \Vert \sigma_s \Vert^2 \left(\frac{\Vert \xi_s \Vert_2}{2C}\right)^2ds \leq \frac{1}{2} + \int_0^t 2CC' \Vert \sigma_s^2 \Vert ds < 1
\end{equation*}
And by continuity of $\xi$ we have $t+ \epsilon \in I$. Thus $I = [0,\tau]$ and for any $t\in [0,\tau]$ we have that $\xi$ is bounded by a constant which only depends on the parameters of the problem. Now we shall see that for a nice choice of $\gamma$ we have $\tau = + \infty$ on $\mathcal{A}$. Indeed for $\gamma$ small enough we have $U_0 \geq 1/2$. Now let $t$ the waiting time for $U$ to reach $1/2$. Then by continuity of $U$ for $s\leq t$ we have $U_s \geq 1/2 >0$ thus the bound on the iterates found above is valid and for $\gamma$ small enough we have that $U_t > 1/2$ which is a contradiction by continuity of $U$. Thus there is a threshold under which $U$ is always greater than $1/2$. Thus the above bound on the iterates is valid for all times.
\end{proof}
\begin{proposition}
    On $\mathcal{A}$ for $\gamma$ and $\int \Vert \sigma_s \Vert^2 ds$ small enough the integral of the loss converges.
\end{proposition}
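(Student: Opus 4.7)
The plan is to reuse the Lyapunov inequality for $V_t$ derived in the proof of the previous proposition. On the event $\mathcal{A}$, integrating Lemma~\ref{lem:lyapunov} and controlling the two martingale terms by their quadratic variations via Lemma~\ref{lem:howard} yields
\begin{equation*}
V_t \leq V_0 + a - 2 \int_0^t L(\beta_s)\, U_s\, ds + C' \int_0^t \Vert \sigma_s \Vert^2 \bigl(\Vert \beta_s \Vert^2 + \Vert \beta^\ast \Vert^2\bigr)\, ds,
\end{equation*}
exactly as in the preceding argument. Two facts from that proposition can then be used directly: on $\mathcal{A}$ the iterates $\xi_s$ are uniformly bounded by a constant $M$, and for $\gamma$ small enough one has $U_s \geq 1/2$ for every $s \geq 0$. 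Combined with the hypothesis that $\int_0^\infty \Vert \sigma_s \Vert^2\, ds$ is finite and small, the right-hand side is bounded by a constant $K_1$ independent of $t$, so
\begin{equation*}
V_t + \int_0^t L(\beta_s)\, ds \leq K_1.
\end{equation*}

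The next step is to obtain a uniform lower bound on $V_t$ on $\mathcal{A}$, for which I would invoke Lemma~\ref{lem:control_iterations}. Rearranging that lemma gives
\begin{equation*}
V_t \geq \tfrac{1}{4}\Vert \xi_t \Vert_1 - \tfrac{1}{4}\Vert \beta^\ast \Vert_1 \ln\!\left(\frac{\sqrt{2}\,\Vert \xi_t \Vert_1}{\min_i \alpha_i^2}\right),
\end{equation*}
and since $\Vert \xi_t \Vert_1$ is bounded on $\mathcal{A}$ while $\min_i \alpha_i^2 > 0$ is fixed by the initialization, the right-hand side is bounded below by a constant $-K_2$. Substituting into the previous display yields $\int_0^t L(\beta_s)\, ds \leq K_1 + K_2$ for every $t \geq 0$, and since the integrand is nonnegative, monotone convergence gives $\int_0^\infty L(\beta_s)\, ds < \infty$.

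The main delicate point is the lower bound on $V_t$: neither the Bregman-divergence part of $V_t$ nor its individual summands are obviously nonnegative for the hyperbolic entropy when $\alpha_t$ becomes small, so one cannot simply drop $V_t$ from the right-hand side. The key observation is that Lemma~\ref{lem:control_iterations}, originally used in the previous proof as a bootstrap that bounds $\Vert \xi_t \Vert_1$ in terms of $V_t$, also reads in reverse as a lower bound on $V_t$ in terms of $\Vert \xi_t \Vert_1$; once the iterates are known to be bounded, this lower bound becomes uniform in $t$, which closes the argument.
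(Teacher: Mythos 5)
Your proof is correct and follows essentially the same route as the paper: on $\mathcal{A}$ you use the Lyapunov bound with $U_s \geq 1/2$ and the boundedness of the iterates to get $\int_0^t L(\beta_s)\,ds \leq C - V_t + C'\int_0^t \Vert\sigma_s\Vert^2 ds$, then lower-bound $V_t$ uniformly by reading Lemma~\ref{lem:control_iterations} in reverse, exactly as the paper does. The closing remark about why $V_t$ cannot simply be dropped is a nice clarification but does not change the argument.
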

\begin{proof}
In the proof of the last lemma we have shown that for the right choice of $\gamma$ and $\int \Vert \sigma_s \Vert^2 ds$, $U_s$ is greater than $\frac{1}{2}$ on $\mathcal{A}$. Thus by boundedness of the iterates we have
\begin{equation*}
    \int_0^t L(\beta_s)ds \leq C - V_t + C' \int_0^t \Vert \sigma_s \Vert^2 ds
\end{equation*}
It remains to lower bound $V$ in order to prove convergence of the integral of the loss. Or the computations in lemma \ref{lem:control_iterations} shows that
\begin{equation*}
    V_t \geq \frac{1}{4}\left(\Vert \xi_t \Vert_1 - \Vert \beta^\ast \Vert_1 \ln\left(\frac{\sqrt{2}\Vert \xi_t\Vert_1}{\min \alpha_i^2}\right)\right)
\end{equation*}
which is lower bounded because the iterates are bounded. Thus the integral converges.
\end{proof}
Finally we have all the ingredients to show that the iterates converge.
\begin{proposition}
    On $\mathcal{A}$ for $\gamma$ and $\int \Vert \sigma_s \Vert^2 ds$ small enough the iterates converge.
\end{proposition}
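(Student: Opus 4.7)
The plan is to upgrade the boundedness of $(\beta_t)$ on $\mathcal{A}$ to genuine convergence in three steps: extract a subsequential limit, identify it as an interpolator, and then run a Bregman-divergence Lyapunov argument centered at that interpolator to pin down the entire trajectory.

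By the preceding lemma, $(\beta_t)$ is bounded on $\mathcal{A}$, so Bolzano--Weierstrass supplies a sequence $t_k \to \infty$ and a point $\beta_\infty$ with $\beta_{t_k} \to \beta_\infty$. I next claim that $L(\beta_\infty) = 0$. From $\int_0^\infty L(\beta_s)\,ds < \infty$ (previous proposition) one can extract times at which $L$ is arbitrarily small; to transfer this vanishing to the subsequence $(t_k)$, I would apply It\^o's formula to $L(\beta_t)$ and use uniform boundedness of $(\beta_t)$ to control both the drift and the quadratic variation of $s \mapsto L(\beta_s)$ on compact intervals. This produces a uniform modulus of continuity that rules out instantaneous oscillations, and continuity of $L$ then gives $L(\beta_\infty) = 0$.

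I then introduce the Lyapunov function analogous to $V_t$ but centered at the identified interpolator $\beta_\infty$:
\begin{equation*}
W_t \defeq -\phi_{\alpha_t}(\beta_t) + \langle \nabla \phi_{\alpha_t}(\beta_t),\, \beta_t - \beta_\infty \rangle + \Big\langle |\beta_\infty|,\, \gamma\,\mathrm{diag}(A^T A)\!\int_0^t L(\beta_s)\,ds + \int_0^t \mathrm{diag}(\sigma_s^T \sigma_s)\,ds \Big\rangle.
\end{equation*}
Applying It\^o exactly as in the earlier computation of $dD_t$, the drift of $W_t$ is the sum of a non-positive $-2L(\beta_t)\,dt$ contribution (up to the same $\gamma$-small correction that led to $U_s \geq 1/2$) and a non-negative integrable term involving $\sigma$, while its martingale part has finite quadratic variation on $\mathcal{A}$ by the ingredients already used in the loss-integrability proof. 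Hence $W_t$ converges almost surely. Rewriting $W_t$ as $D_{\phi_{\alpha_t}}(\beta_\infty,\beta_t) - \phi_{\alpha_t}(\beta_\infty)$ plus a deterministic term with a finite limit, I deduce that $D_{\phi_{\alpha_t}}(\beta_\infty,\beta_t)$ converges; along $(t_k)$ it tends to $0$ by continuity and $\beta_{t_k} \to \beta_\infty$, so its limit must be $0$. Strong convexity of $\phi_{\alpha_\infty}$ (valid since $\alpha_\infty > 0$) then forces $\beta_t \to \beta_\infty$, and convergence of $(w_{t,\pm})$ follows from the explicit $\sinh$-representation of Lemma~\ref{lem:formule_beta_sinh}, once one observes that $\alpha_t$, $\eta_t$, and $\delta_t$ have each been shown to converge (the martingale parts via finite quadratic variation, the deterministic parts via $\int_0^\infty L(\beta_s)\,ds < \infty$ and $\int_0^\infty \|\sigma_s\|^2\,ds < \infty$).

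The main obstacle is the second step: integrability of $s \mapsto L(\beta_s)$ alone does not preclude recurrent excursions of $\beta_t$ away from the interpolator manifold, and so a careful modulus-of-continuity argument for the stochastic trajectory is needed to identify $\beta_\infty$ as an interpolator. The third-step Lyapunov computation, by contrast, is essentially a re-run of the one already used to establish convergence of the loss integral, performed now at $\beta_\infty$ instead of a generic interpolator $\beta^\ast$.
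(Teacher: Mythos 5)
Your endgame is the paper's own argument: establish that the Bregman-divergence process centered at a subsequential limit converges (its bounded-variation part via $\int_0^\infty L(\beta_s)\,ds<\infty$ and boundedness of the iterates on $\mathcal{A}$, its martingale part via convergent quadratic variation), show it tends to $0$ along the subsequence, and conclude with strong convexity of $\phi_{\alpha_t}$, uniform in $t$ because $\alpha_t$ is decreasing with positive limit and the iterates are bounded. Your $W_t$ is, up to a convergent integral correction, exactly the paper's $D_t$ evaluated at the interpolator $\beta^\ast=\beta_\infty$ (the paper notes $D_t$ converges for \emph{any} interpolator and then specializes), so that part is sound and essentially identical.

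The weak point is your identification of $\beta_\infty$ as an interpolator. You fix an arbitrary convergent subsequence first and then must prove $L$ vanishes along it; integrability of $s\mapsto L(\beta_s)$ alone does not give this, and the remedy you sketch --- It\^o on $L(\beta_t)$ with drift and quadratic-variation bounds ``on compact intervals'' yielding a ``uniform modulus of continuity'' --- does not go through as stated: boundedness of the iterates only bounds the local diffusion coefficient, and no uniform-in-time modulus of continuity is available for the martingale part over an infinite horizon (already false for Brownian motion); what you need is a statement as $t\to\infty$. To repair it along your lines you would have to show that the drift of $L(\beta_t)$ is absolutely integrable and that the quadratic variation of its martingale part converges (both do follow from $\int_0^\infty L(\beta_s)\,ds<\infty$, $\int_0^\infty\Vert\sigma_s\Vert^2\,ds<\infty$ and boundedness, since $\Vert\nabla L\Vert^2\lesssim L$ on bounded sets), so that $L(\beta_t)$ itself converges, necessarily to $0$. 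The paper sidesteps all of this by reversing the order of extraction: since $\int_0^\infty L(\beta_s)\,ds<\infty$, there are times $t_k\to\infty$ with $L(\beta_{t_k})\to0$; boundedness then yields a further subsequence converging to some $\beta_\infty$, which is an interpolator by continuity of $L$. With that reordering, your proof coincides with the paper's; the final remark on recovering convergence of $w_{t,\pm}$ from the $\sinh$ representation is a harmless addition not needed for the stated proposition.
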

\begin{proof}
We have the following integral expression of $D_t$ thanks to lemma \ref{lem:lyapunov}
\begin{equation*}
\begin{split}
    D_t &= D_0+ \int_0^t -2 L(\beta_s)+\gamma L(\beta_s)\langle\xi_s,\text{diag}(A^TA) \rangle + \langle \xi_s,\text{diag}(\sigma_s^T\sigma_s)\rangle ds \\
    &+\int_0^t \sqrt{\gamma L(\beta_s)} \langle \bar{X}^TdB_s,\beta_s -\beta^\ast \rangle + \langle \sigma_sd\tilde{B}_s,\beta_s -\beta^\ast \rangle ds
\end{split}
\end{equation*}
Note that $D_t$ converges because the bouded variation part converges due to the convergence of the integral of the loss and the boundedness of the iterates. The martingale part converges because the quadratic variation is converging. Recall that
\begin{equation*}
    D_t = -\phi_{\alpha_t} + \langle \nabla \phi_{\alpha_t}(\beta_t), \beta_t - \beta^\ast \rangle
\end{equation*}
We have that $D$ converges for any choice of interpolator $\beta^\ast$.\\
The integral of the loss is convergent thus up to extraction $L(\beta_t)$ converges to 0. Since $\beta_t$ is bounded a second extraction ensures the converges of the iterates. Thus there is a subsequence $\beta_{\psi(t)}$ such that it converges to $\beta_\infty$ and $L(\beta_\infty) = 0$ thus it is an interpolator. And we have
\begin{equation*}
\begin{split}
    \phi_{\alpha_\infty}(\beta_\infty) - D_t &= \phi_{\alpha_\infty}(\beta_\infty) - \phi_{\alpha_t}(\beta_t) - \langle \nabla\phi_{\alpha_t}(\beta_t),\beta_t - \beta_\infty\rangle\\
    &\geq \phi_{\alpha_t}(\beta_\infty) - \phi_{\alpha_t}(\beta_t) - \langle \nabla\phi_{\alpha_t}(\beta_t),\beta_t - \beta_\infty\rangle\\
    &= D_{\phi_{\alpha_t}}(\beta_\infty,\beta)\\
    &\geq 0
\end{split}
\end{equation*}
Because $\alpha \mapsto \phi_{\alpha}(\beta)$ is decreasing and $\alpha_t \geq \alpha_\infty$.
Note that $\phi_{\alpha_\infty}(\beta_\infty) - D_{\psi(t)} \to 0$ by convergence of $\beta_{\psi(t)}$ to $\beta_\infty$. Thus by convergence of $D$ we have that $\phi_{\alpha_\infty}(\beta_\infty) - D_t \to 0$ and in turn $D_{\phi_{\alpha_t}}(\beta_\infty,\beta) \to 0$. Recall that
\begin{equation*}
    \nabla^2 \phi_{\alpha}(\beta) = \frac{1}{4}\text{diag}\left(\frac{1}{\sqrt{\beta^2+4\alpha^2}}\right) \geq \frac{1}{4 \sqrt{ \Vert \beta \Vert^2 + 4\max \alpha_i^2}}Id
\end{equation*}
 thus $\phi_{\alpha_t}$ is strongly convex of with a parameter that does not depend on $t$ by decreasingness of $\alpha_t$ and boundedness of $\beta_t$. Thus we have that $D_{\phi_{\alpha_t}}(\beta_\infty,\beta) \geq \mu/2 \Vert \beta_t - \beta_\infty \Vert^2$ which concludes the convergence.
\end{proof}
\subsubsection{Proof of Proposition 3}
In the last section we have shown that the iterates converge toward an interpolator. However that interpolator is not a minimizer of the potential $\phi_{\alpha_\infty}$ as shown in proposition \ref{prop:min_problem_DLN}. However by strong convexity of $\phi_\alpha$ we have a control of the distance between $\beta_\infty$ and the minimizer of $\phi_{\alpha_\infty}$ which we denote by $\beta^\ast_{\alpha_\infty}$
\begin{proof}
Let $\phi(\beta) = \phi_{\alpha_\infty}(\beta) + \iota_{X\beta = Y}$ where $\iota$ is the indicator function equal to $0$ if the condition is satisfied and $+\infty$ otherwise. 
Note that $\phi$ is still strongly convex. 
Indeed, we can show that $\phi - \mu\Vert.\Vert^2$ is convex: let $\beta,\nu \in \mathbb{R}^d$, $\lambda \in (0,1)$. $C = \{\beta \text{ s.t. } X\beta=Y\}$ is a convex set as an affine subspace of $\mathbb{R}^d$. Therefore:

\begin{equation}
    \lambda \beta + (1-\lambda)\nu \notin C \implies \beta \notin C \text{ or } \nu \notin C 
\end{equation}
\begin{equation}
    \iota_{X(\lambda \beta + (1-\lambda)\nu)=Y} = \infty \implies  \iota_{X\beta=Y} = \infty \text{ or } \iota_{X\nu=Y} = \infty
\end{equation}

We also have that $\phi_{\alpha_\infty}-\mu\Vert.\Vert^2$ is convex on a bounded set which contains $\beta_\infty$ and $\beta_{\alpha_\infty}^\ast$. Finally $\phi - \mu\Vert.\Vert^2$ is convex, i.e. $\phi$ is $\mu-$strongly convex on a bounded set.

We have $\beta_\infty$ solution of 
\begin{equation}
    \min \phi(\beta) -\langle r_\infty , \beta\rangle
\end{equation}
And $\beta_{\alpha_\infty}^\ast$ is the solution to
\begin{equation}
    \min \phi(\beta)
\end{equation}


Since $\phi$ is strongly convex we have that $f = \phi - \mu \Vert \cdot - \beta^\ast \Vert^2$ is convex. Notice that $0 \in \partial f(\beta_\infty) - r_\infty$ thus
\begin{equation}
    \phi(\beta^\ast) -\langle r_\infty , \beta_0\rangle- \mu \Vert \beta^\ast - \beta_\infty \Vert^2 \geq \phi(\beta_\infty) -\langle r_\infty , \beta_\infty\rangle 
\end{equation}

We thus observe that:
\begin{equation}
    \phi(\beta^\ast) -\langle r_\infty , \beta^\ast\rangle \geq \phi(\beta_\infty) -\langle r_\infty , \beta_\infty\rangle + \mu \Vert \beta^\ast - \beta_\infty \Vert^2
\end{equation}

Thus by optimality of $\beta^\ast$ ($\phi(\beta^\ast) - \phi(\beta_\infty) \leq 0$)

\begin{equation}
\begin{split}
    \langle r_\infty ,\beta_\infty- \beta^\ast\rangle & \geq \phi(\beta^\ast) -\langle r_\infty , \beta^\ast\rangle -(\phi(\beta_\infty) -\langle r_\infty , \beta_\infty\rangle) \\ &\geq \mu \Vert \beta^\ast - \beta_\infty \Vert^2
\end{split}
\end{equation}

Finally using Cauchy-Schwarz inequality we have

\begin{equation}\label{eq:appendix_final_ineq}
\boxed{
    \Vert r_\infty \Vert \geq \mu \Vert \beta^\ast - \beta_\infty \Vert
}
\end{equation}

\end{proof}

\subsection{Additional Numerical Experiments}\label{sec:appendix_additional_exps}

\paragraph{Reminder}We adopt the same set-up as \cite{pesme2021implicit} for sparse regression.
We select parameters n = 40 and d = 100, and then create a sparse model $\beta^*_{l_0}$ with the constraint that its $l_0$ norm is equal to 5. 
We generate the features $x_i$ from a normal distribution with mean 0 and identity covariance matrix $N(0, I)$, and compute the labels as $y_i = x_i^T \beta^*_{l_0}$. 
We always use the same step size of $\gamma = 1/(1.3||\Bar{X}^T\Bar{X}||_2)$. Notice that $||\beta_t - \beta^*_{l_0}||^2_2$ is the validation loss in the experiments.

\paragraph{Introduction}As outlined in Section~\ref{sec:DLN} and prooved in Section~\ref{sec:appendix_DLN_proofs} in a more general setting, our investigation reveals that Noisy-SGD yields an identical solution to GD, albeit (1) operating on an altered potential $\phi_{\alpha_{\infty}}$, and (2) starting from an effective non-zero initialization denoted as $\Tilde{\beta}_0$. 
In contrast, the sole parameter affected by SGD is $\alpha$.
In the context of Noisy-SGD, it is noteworthy that $\alpha_{\infty}$ diminishes with increasing $\sigma$.
This signifies that as more noise is introduced, the effective $\alpha$ becomes smaller. 
If this did not influence the effective initialization, it would directly imply an ``improved" implicit bias, in the case of the sparse regression under study.


\paragraph{Trade off}To be more precise, if the effective initialization $\Tilde{\beta_0}$ remained unaltered, the noisy process would ultimately converge to $\beta^\star_{\alpha_\infty}$, which is the solution that minimizes $\phi_{\alpha_\infty}$. 
We already know that this solution is sparser than the one obtained via SGD because $\alpha_\infty$ is smaller (as referenced in Equation~\ref{eq:alpha_noisy}).
As demonstrated in Proposition~\ref{eq:impact_eff_init}, the presence of the new effective initialization $\Tilde{\beta_0}$ does introduce some perturbation to the solution. 
However, the degree of its influence directly depends on the level of noise introduced. 
This ensures that $\beta_{\infty}$, which is the solution obtained through Noisy-SGD, remains in close proximity to $\beta^\star_{\alpha_\infty}$. 
This scenario implies the existence of a tradeoff. Increasing the value of $\sigma$ will:

\begin{enumerate}
    \item Augment the sparsity of $\beta^\star_{\alpha_\infty}$ as $\beta^\star_{\alpha_\infty}$ gets closer to the (sparse) ground truth $\beta^\star_{l_0}$
    \item Expand the gap between $\beta_{\infty}$ and $\beta^\star_{\alpha_\infty}$, which could potentially diminish the impact of the first outcome if $\beta_{\infty}$ gets far from $\beta^\star_{l_0}$
\end{enumerate}

In summary, if the influence of \textbf{1.} surpasses the effect of \textbf{2.}, Noisy-SGD would enhance the implicit bias. Conversely, if \textbf{2.} outweighs \textbf{1.}, this enhancement may not necessarily occur.
To empirically quantify the trade-off, we measure the gap between $\beta^\star_{\alpha_\infty}$ and $\beta_{\infty}$ across varying values of $\alpha$ and $\sigma$. 
Our approach involves (a) obtaining $\beta_{\infty}$ through Noisy-SGD, (b) approximating $\alpha_\infty$ numerically based on the loss logs, and (c) subsequently executing GD with $\alpha = \alpha_\infty$ to obtain $\beta^\star_{\alpha_\infty}$.
In Figure~\ref{fig:Figure4}, our findings underscore that, for $\alpha=0.1$, the distance between $\beta^\star_{\alpha_\infty}$ and $\beta_{\infty}$ seem to indeed increase with growing values of $\sigma$, validating the second point (\textbf{2.}).
Moreover, as revealed in Figure~\ref{fig:Figure3}, when using $\alpha=0.1$, Noisy-SGD consistently converges to sparser solutions compared to standard SGD. 
This trend remains consistent across various noise levels, as illustrated in Figure~\ref{fig:appendix_0.1}. 
Specifically, as we elevate the magnitude of $\sigma$, the solutions become progressively closer to the sparse interpolator (ground truth).
In this context, the influence of the new initialization $\Tilde{\beta_0}$ appears to be of secondary importance, with therefore point \textbf{1.} consistently holding more significance than point \textbf{2.}

\begin{figure*}[t!]
    \centering
    \includegraphics[width=0.7\textwidth]{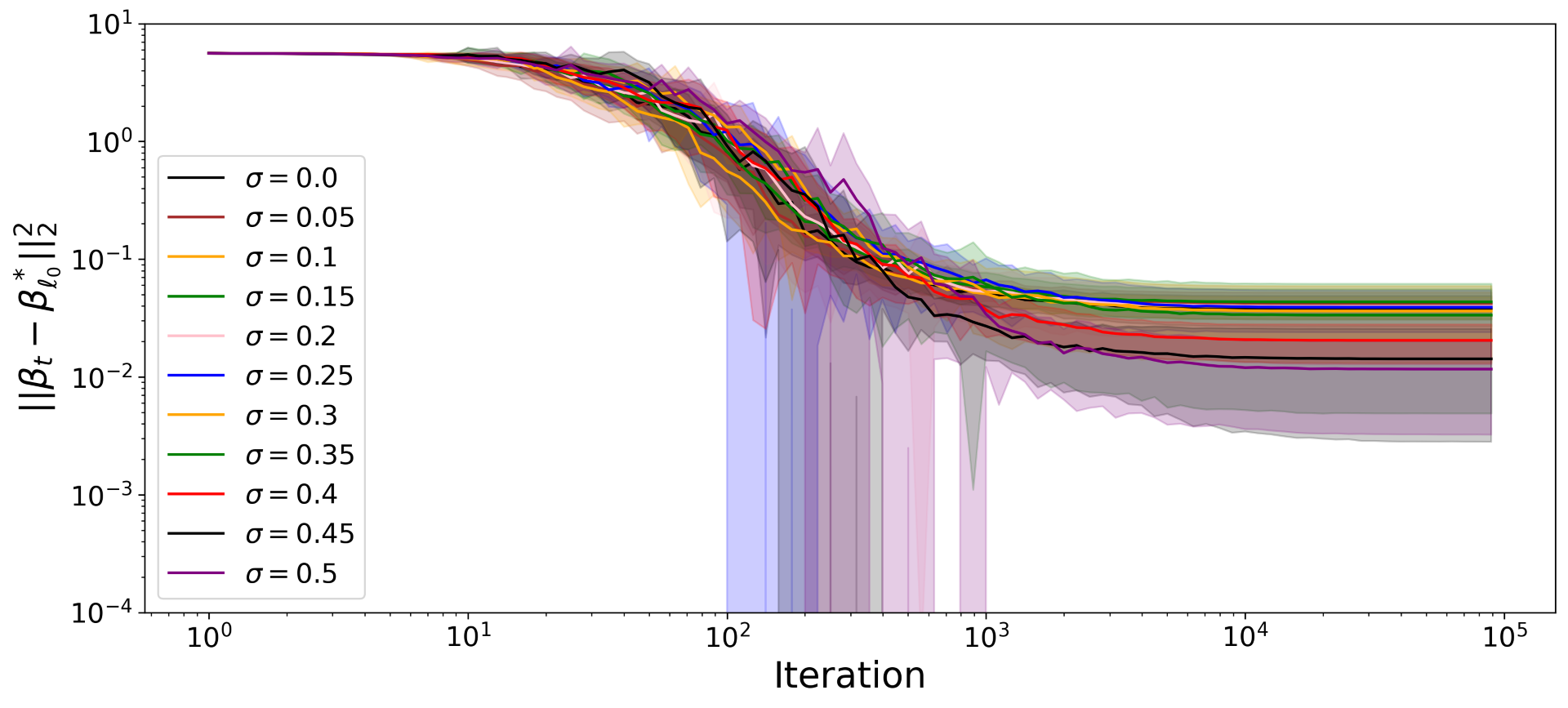}
    \vspace{-1em}
    \caption{
    Diagonal Linear Network from $\alpha=0.1$: Implicit Bias of SGD and Noisy SGD different values of $\sigma$ in Equation~\ref{eq:noisy_SGD_DLN}, \textbf{from $\alpha=0.1$}.
    Shaded areas represent one standard variation over 5 runs, and plain lines represent the average values.
    Starting from this initialization, the bigger $\sigma$ is, the closer the solution obtained with Noisy-SGD is to the sparse solution $\beta^*_{l_0}$. 
 \vspace{-0.1em}
 }
    \label{fig:appendix_0.1}
\end{figure*}

Nonetheless, Figure~\ref{fig:Figure4} reveals that the magnitude of the gap between $\beta^\star_{\alpha_\infty}$ and $\beta_{\infty}$ is smaller or comparable to the gap between $\beta_{\infty}$ and $\beta^\star_{l_0}$ in Figure~\ref{fig:Figure3}. 
This observation implies that Noisy-SGD may be advantageous only in this specific scenario, as the impact of \textbf{2.} is small. 
However, a question arises: if the initial $\alpha$ were even smaller, would \textbf{1.} still dominate \textbf{2.}?

Figure~\ref{fig:appendix_0.01} provides insight into this inquiry. 
Even with an initial $\alpha$ reduced by a factor of ten, we observe that the introduction of noise continues to exert a positive influence on the sparsity of the solution.
Furthermore, when examining the gap between $\beta^\star_{\alpha_\infty}$ and $\beta_{\infty}$ in Figure~\ref{fig:appendix_0.01_sigma}, we find that, similarly to the case where $\alpha=0.1$, it remains of a comparable order of magnitude to the gap between $\beta_{\infty}$ and $\beta^\star_{l_0}$ observed in Figure~\ref{fig:appendix_0.01}. 
This observation helps to elucidate why the advantages of Noisy-SGD persist under these conditions: the distance between $\beta^\star_{\alpha_\infty}$ and $\beta_{\infty}$ empirically also depends (inversely) on $\alpha$, which mitigates the impact of \textbf{2.} over \textbf{1.}


\begin{figure*}[t!]
    \centering
    \includegraphics[width=0.75\textwidth]{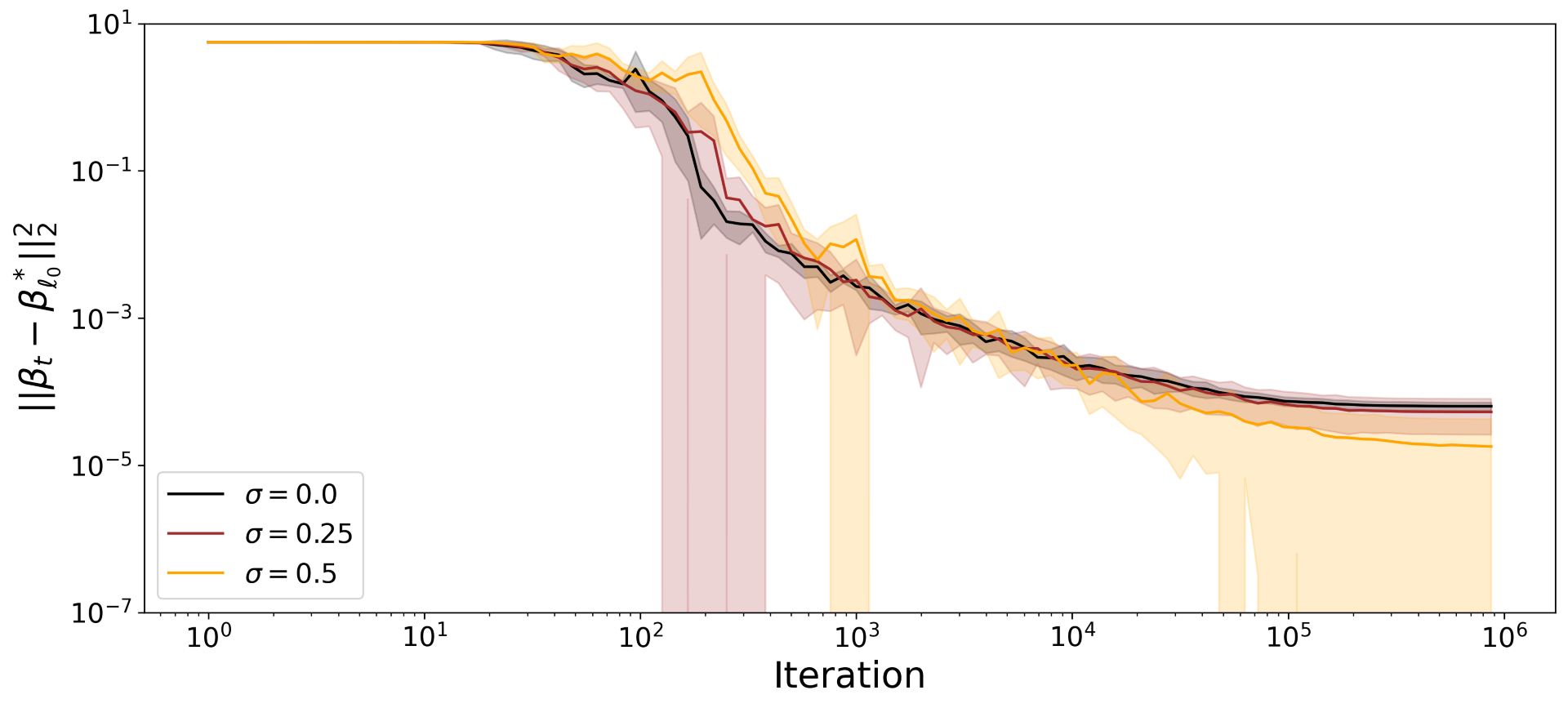}
    \vspace{-1em}
    \caption{
    Diagonal Linear Network from $\alpha=0.01$: Implicit Bias of SGD and Noisy SGD different values of $\sigma$ in Equation~\ref{eq:noisy_SGD_DLN}, \textbf{from small initialization $\alpha=0.01$}.
    Shaded areas represent one standard variation over 5 runs, and plain lines represent the average values.
    Starting from this initialization, the bigger $\sigma$ is, the closer the solution obtained with Noisy-SGD is to the sparse solution $\beta^*_{l_0}$. 
 \vspace{-0.01em}
 }
    \label{fig:appendix_0.01}
\end{figure*}

\begin{figure}[t!]
    \centering
    \includegraphics[width=0.5\columnwidth]{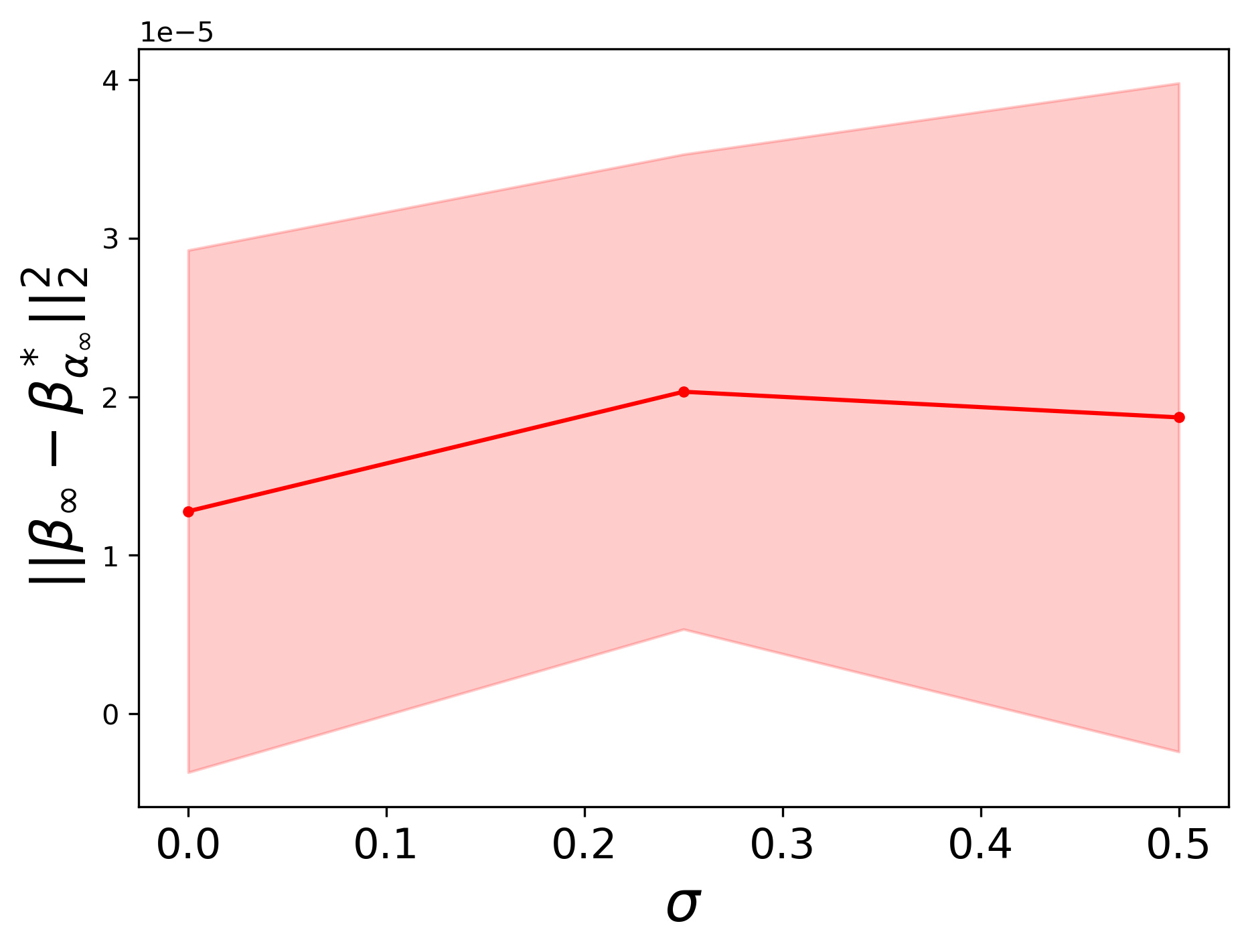}
    \vspace{-0.01em}
    \caption{
     DLN from $\alpha=0.1$: Distance between $\beta^*_{\alpha_\infty}$, the solution that minimizes $\phi_{\alpha_\infty}$ ---obtained by GD from $\alpha_\infty$--- and $\beta_{\infty}$, the one obtained by Noisy-SGD \textbf{from $\alpha=0.01$} (see Proposition~\ref{eq:impact_eff_init}).
    Shaded areas represent one standard deviation over 5 runs. 
    Similar to the case of $\alpha=0.1$ (see Figure~\ref{fig:Figure4}), the distance is of the same order of magnitude than the distance between $\beta_{\infty}$ and the sparse solution $\beta_{l_0}$ (see Figure~\ref{fig:appendix_0.01}). 
    It explains why why the implicit bias is enhanced by Gaussian noise in this case.
    \vspace{-0.01em}
 }
    \label{fig:appendix_0.01_sigma}
\end{figure}

\paragraph{Conclusion:}
The effect of Noisy-SGD on the solution's sparsity, as discussed in Section~\ref{sec:DLN}, remains significant even when initiated with smaller values of $\alpha$, which corresponds to cases where the solutions obtained from GD and SGD already exhibit some enhanced degree of sparsity. 
Therefore, the practical implications of the theoretical trade off appears to be modest, and the reinforcement of the implicit bias from Noisy-SGD prevails: the introduction of Gaussian noise enhances or at least maintains the implicit bias of SGD in a robust way.

\label{sec:appendix}

\end{document}